\newcommand{\R}{\mathbb{R}}
\newtheorem{theorem}{Theorem}[section]
\newtheorem{corollary}[theorem]{Corollary}
\newtheorem{lemma}[theorem]{Lemma}
\title{Bioinspired random projections for robust, sparse classification}
\author{Nina Dekoninck Bruhin\thanks{Department of Mathematics, ETH Zurich, Switzerland} \and Bryn Davies\thanks{Department of Mathematics, Imperial College London, UK}}
\date{}
\begin{document}
\maketitle

\begin{abstract}
    Inspired by the use of random projections in biological sensing systems, we present a new algorithm for processing data in classification problems. This is based on observations of the human brain and the fruit fly's olfactory system and involves randomly projecting data into a space of greatly increased dimension before applying a cap operation to truncate the smaller entries. This leads to a simple algorithm that is very computationally efficient and can be used to either give a sparse representation with minimal loss in classification accuracy or give improved robustness, in the sense that classification accuracy is improved when noise is added to the data. This is demonstrated with numerical experiments, which supplement theoretical results demonstrating that the resulting signal transform is continuous and invertible, in an appropriate sense.
\end{abstract}

\vspace{0.5cm}

\noindent\textbf{Key words}: random matrix, neural processing, musical genre classification, machine learning, data compression, support vector machine.

\vspace{0.2cm}

\noindent\textbf{AMS subject classifications}: 94A12, 15B52, 68T01, 92C20.

\vspace{0.5cm}

\section{Introduction}

Through millennia of evolution, biology has developed sensing architectures that are remarkably versatile in the face of diverse classification problems. As a result, biological systems represent the gold standard in many fields and are often the benchmark against which other solutions are compared. As well as giving impressive success rates for challenging classification problems, biological systems often give very sparse representations of data and perform well in the presence of significant noise. It is these abilities that the simple algorithm developed in this work is designed to replicate.

The remarkable performance of biological systems is not unique to signal processing and data classification problems, and efforts to replicate their function have given rise to the science of \emph{biomimicry} \cite{benyus1997biomimicry}. The fundamental principle is to seek innovative solutions to challenging problems by taking inspiration from nature's remarkable achievements. Famous success stories include optimising the aerodynamic design of high-speed trains, designing passive climate control systems for buildings and the invention of hook and loop fasteners such as Velcro \cite{benyus1997biomimicry, biomimicry}. In signal processing, there have also been numerous attempts to replicate biological systems. For example, researchers have constructed sophisticated sonar and echolocation devices \cite{hodges2011underwater}, replicated the communication and perception of electric fish \cite{baldassari2021multi, gottwald2019bio}, designed metamaterial spectrometers that replicate the cochlea \cite{ammari2020biomimetic, ammari2020mimicking, rupin2019mimicking}, found algorithms that retain biological sensing invariants \cite{bruna2013invariant, mallat2012group, deepspectrum} and, perhaps most famously of all, designed network architectures reminiscent of neural processing \cite{nielsen2015neural, lecun2015deep}.

In this work, we will focus our attention on the use of random projections in biological sensing systems. It is often the case that connections between neurons are formed at random and they differ between different organisms. The fruit fly's olfactory system is an example that is particularly well understood \cite{neuralAlgorithm, stevens, stevensStatistical} and there is also some evidence that the human brain behaves similarly \cite{stevens}. An approach of this type has the advantage that you do not need to undertake expensive training periods in order to learn the connections. Instead, you randomly form many connections and try to select the best ones. The method we develop in this work will be closely inspired by the fruit fly's olfactory system \cite{neuralAlgorithm, stevens, stevensStatistical}. It will have two steps: first, a feature vector will be projected into a high-dimensional space by multiplying by a random matrix. This matrix will be rectangular, such that the dimension of the image space is much larger than that of the initial feature vector. Following this, we will apply a cap operation, that sets all but the few largest entries in the vector to zero. This means the final representation can be chosen to be suitably sparse. 

Random projections have been used in a variety of computational applications, often with the aim of reducing computational time. This is particularly the case when they are used to replace expensive training steps. For example, they have been used in neural networks to randomly select weights \cite{robustlearning} or features \cite{KitchenSinks}. In both cases, there is little loss of performance with a significant reduction in computation time. Similarly, random projections have been used to reduce dimensionality, either as the first step in a classification algorithm \cite{visualCategorization} or to approximate kernel functions \cite{RandomKernelMachines}. In many cases, the structure of the random projections is specifically chosen for the task in hand (\emph{e.g.} to detect corners in images, as in \cite{visualCategorization}). Conversely, the method developed in this work will be, theoretically, independent of the setting and will treat the problem of classifying a given feature vector of arbitrary dimension.

As well as reducing computation time, there are also deep connections between the use of random projections and classification robustness. For example, \cite{robustlearning} shows that more robust targets (in a suitable sense) can be more effectively compressed by being randomly projected to a lower dimensional space. As a result, concepts that are sufficiently robust can be successfully randomly projected to a lower dimension, where they can be classified. In this work, we explore the extent to which our bioinspired algorithm facilitates robust classification. In particular, we show that, for the problem of musical genre classification, the use of random projections gives an improvement in classification accuracy when random noise is added to the signal. 

This article will begin by exploring the biological motivations for our algorithm in more detail. Once properly motivated, we will explore the mathematical properties of our bioinspired random projections and prove results that characterise the extent to which the resulting transformation is continuous and invertible. Finally, we will perform numerical experiments on the toy problem of musical genre classification. These experiments demonstrate that our bioinspired transformation, when used in partnership with a support vector machine, not only retains the classification accuracy but potentially improves on it, particularly when noise is added to the data.

\section{Biological motivations} \label{sec:biology}

We will develop an approach that is based on how random projections are used in biological systems. In this section, we summarise the main observations that will influence the design of our algorithm.

\subsection{The fruit fly's olfactory system}

Early processing of odors in the fruit fly's olfactory system consists of roughly three steps. In the first step, olfactory receptor neurons (ORNs) located in the fly's antennae detect an odor and send a signal to projector neurons in the antennal lobe. In the second step, the projector neurons transmit the firing rates to parts of the fly's brain known as the mushroom body and the lateral horn. We will focus on the mushroom body, since this part of the fly's brain is known to be important for learning new smells and creating memories associated with them \cite{olfactoryinfo}. Here, signals are transmitted randomly to a large number of Kenyon cells. Finally, in the third step, anterior paired lateral neurons suppress a large number of the Kenyon cells, so that only those with the highest firing rates are uninhibited. The important parts of the fly's olfactory system are shown in Figure~\ref{fig:fly_diagram}(a) (which is reproduced from \cite{perisse2013shocking}), along with a sketch of the connections in Figure~\ref{fig:fly_diagram}(b).

\begin{figure}
    \centering
    \begin{tikzpicture}
        \node[inner sep=0pt] (fly) at (0,0)
    {\includegraphics[width=.5\textwidth]{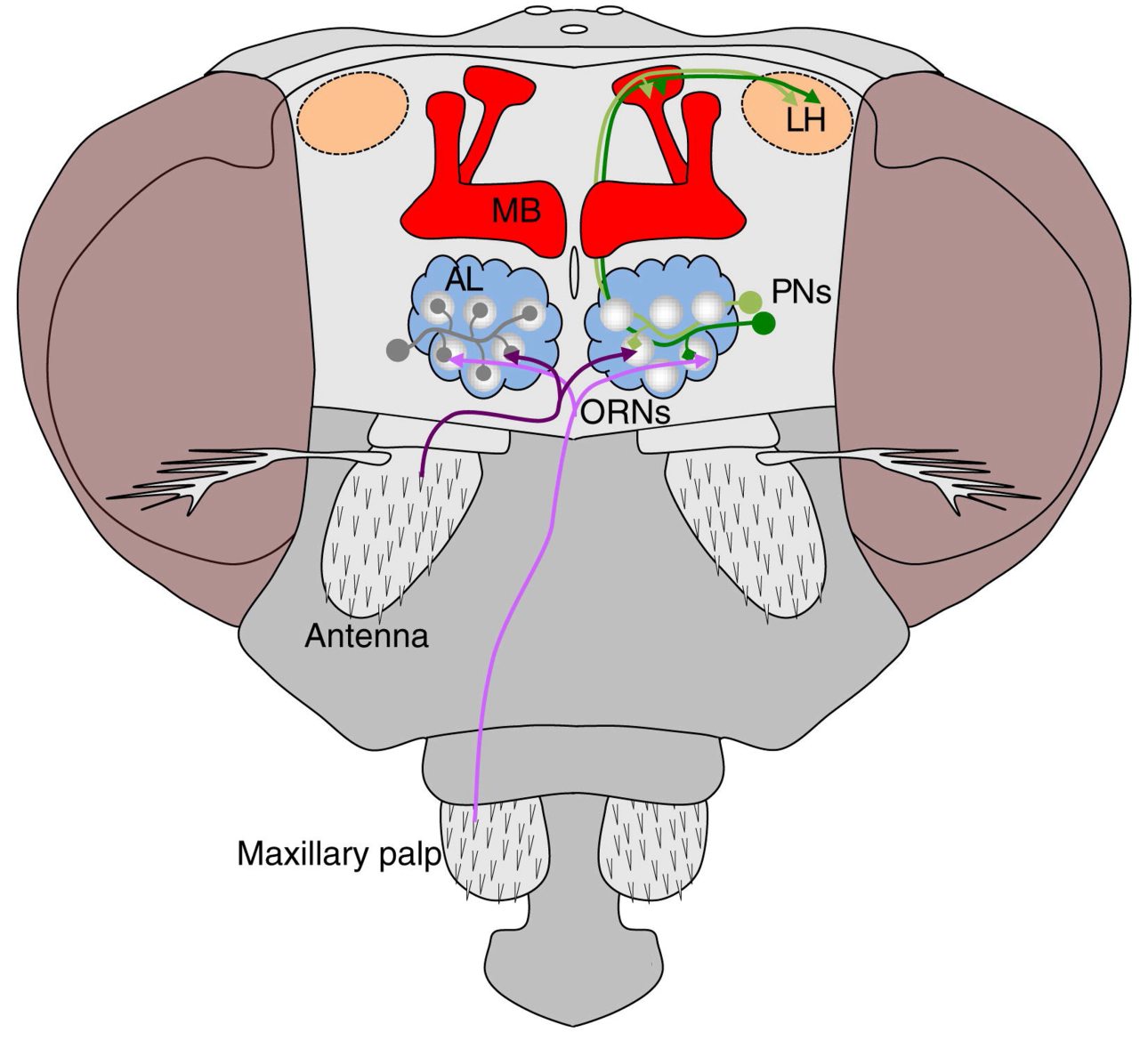}};
        \node[inner sep=0pt] (network) at (8,0)
    {\includegraphics[width=.35\textwidth]{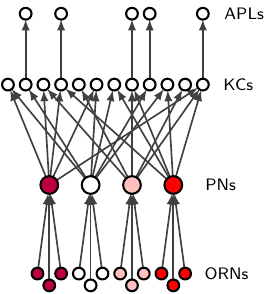}};
    \node[inner sep=0pt] at (-3.5,-2) {(a)};
    \node[inner sep=0pt] at (5,-2) {(b)};
    \end{tikzpicture}
    \caption{The first steps of the fruit fly's olfactory system, shown (a) in a sketch of the fly's head and (b) as a simplified schematic. Odors are detected by olfactory receptor neurons (ORNs) in the antennae and the maxillary palps. ORNs of the same type fire to the same projector neurons (PNs) in the antennal lobe (AL). Projector neurons then fire to the mushroom body (MB) and the lateral horn (LH). In the MB,  signals are transmitted in random combinations to a large number of Kenyon cells (KCs). Finally, anterior paired lateral (APL) neurons inhibit the output of 95\% of the KCs, leaving only those with the largest firing rates. The image in (a) is reproduced from Perisse~et~al.~\cite{perisse2013shocking} with the permission of Elsevier.}
    \label{fig:fly_diagram}
\end{figure}

Smells are detected when volatile molecules attach themselves to protein receptors of olfactory receptor neurons (ORNs) in the fly's antennae \cite{olfactoryinfo}. Each ORN possesses a particular set of olfactory receptors, that determines which smells it will be activated by. There are roughly 50 different types of ORNs in total. The activation of ORNs is dependent on the concentration of the odor. However, flies' behaviour has been shown to be invariant over a wide range of odor intensities \cite{concentrationInvariant, stevensStatistical}, suggesting that subsequent steps in the fly's olfactory system are able to partially remove this dependency on concentration. In particular, \cite{concentrationInvariant} shows that increasing the concentration of an odor triggers more ORNs, that have different sensibilities to the odor. The firing of multiple ORNs in turn activate inhibitory neurons, so that the final firing rates of the projector neurons stay relatively stable over a large range of odor concentrations. The consequence of this is that we elect to disregard the details of this step in our mathematical analogue algorithm, and begin with a feature vector obtained by calculating an appropriate basis decomposition. In our experiments, we will use the scattering transform \cite{bruna2013invariant}.

After being activated, ORNs fire to structures called glomeruli in the antennal lobe. Each glomerulus receives the input from all ORNs of a particular type; there are therefore about 50 glomeruli. In the glomerulus, ORNs make synaptic contact with a projector neuron.  At this stage, the odor information thus can be represented as a 50-dimensional vector, where each coefficient corresponds to the firing rate of a single type of ORN. That signal is then projected to 2000 Kenyon cells in the mushroom body, resulting in a 40-fold increase in signal dimension. Each Kenyon cell receives the firing rates of approximately 6 projector neurons and sums them up \cite{neuralAlgorithm}. Crucially, the projection to Kenyon cells is random, in the sense that the latter do not receive a signal from fixed projector neurons depending on the type of smell detected. From one fly to the other, a similar smell triggers different Kenyon cells, even if the same types of ORNs have been activated. 

Evidence suggests that the set of Kenyon cells activated after exposure to an odor forms the odor ``tag" that allows the fly to recognize it: \cite{campbell} shows that a large overlap in the firing rates of a group of Kenyon cells is a good predictor of whether a fly will judge two smells to be similar. Moreover, the results of \cite{campbell} show that the entire population of Kenyon cells is not necessary to discriminate smells, but rather that a subset of 25 cells gives sufficient information to predict the fly’s response. This is a consequence of the fact that glomeruli fire randomly to Kenyon cells and the results are summed. Thus, the entire information they project can be found in a relatively small subset of the Kenyon cells. One can then wonder why the information is spread over 2000 Kenyon cells, when many fewer seem to provide enough information. Stevens \cite{stevens} argues that the reason for such a large, redundant representation of smells in the mushroom body is to provide multiple representations, so that the fly can later utilise the one containing the crucial information.

Finally, the last step in the fly’s olfactory system consists of an inhibitory process. Anterior paired lateral (APL) neurons deactivate about 95\% of Kenyon cells, leaving only those with the highest firing rates \cite{neuralAlgorithm, stevens}. As a result, the final vector representation of the smell information is relatively large (a vector with 2000 entries) and very sparse. This is easy to reproduce with a cap operation, that acts in the same way by setting all but the largest entries to zero.

Altogether, those steps amount to a random projection of the initial 50-dimensional vector into a 2000-dimensional space, followed by a non-linear operation that only keeps a fixed number of the highest coefficients. The initial 50-dimensional signal vector contains the firing rates of each type of ORN. The projection of the glomeruli to the Kenyon cells can be described by multiplication by a random matrix with size $2000 \times 50$ and entries drawn from $\{0,1\}$. Each row corresponds to a particular Kenyon cell; for each glomerulus that fires a signal to that cell, we write a 1 in the corresponding column of the row vector. All other entries are set to 0. As only few of the projector neurons fire to the next step, the random matrix should be sparse. We modify this formulation slightly in our algorithm, to take advantage of other beneficial properties (\emph{e.g.} when the entries have a symmetric distribution).

\subsection{The human brain}

Interestingly, the application of a random projection followed by an inhibitory process is not unique to the fly’s olfactory system. In fact, such a process plays a role in three parts of the brain: the cerebellum, the hippocampus and the olfactory system.  Stevens \cite{stevens} explains that the way those three structures process information follows a similar three-step architecture to the fly's olfactory system. In the first stage, the information arriving from other brain areas is assembled into a neural code. In the second stage, that code is passed on to a greatly enlarged number of neurons. Finally, in the third stage, this code is broken down to be interpreted in further information processing steps. 

The analogy is particularly clear in the functioning of the cerebellum. This is a large part of the brain that has several important roles, including receiving information from sensory systems and integrating these inputs to fine-tune motor activity \cite{fine2002history}. The initial information code is provided by precerebellar neurons, before being passed on to a much larger number of granule cells. The output of granule cells is then passed through Golgi inhibitory neurons and fed back to the granule cells. Finally, the information code is relayed to Purkinje cells, see \cite{stevens} for more details.

A related model of the brain, which uses random projections and cap operations, was developed by Papadimitriou and Vempala \cite{assemblies} and is worth mentioning at this point. It has been shown that we learn complicated concepts or create memories thanks to the action of assemblies of neurons, where an assembly is defined as a set of densely interconnected neurons that fire almost simultaneously when the associated concept or idea is thought about \cite{assemblies}. Aiming to describe the brain’s function with such assemblies, Papadimitriou and Vempala \cite{assemblies} define notions of projection and capping, which explain experimental data. They describe projection as the repetitive firing of an assembly to a different part of the brain, in order to eventually form a new assembly. Once that link is formed, the firing of the parent assembly will trigger the firing of the child assembly. Their version of a cap operator, which prevents all neurons in a brain area from firing except the $k$ with highest synaptic input, is similar to the fly’s inhibitory process. This model takes the form of a bipartite directed $G_{n,p}$ graph, which Papadimitriou and Vempala \cite{assemblies} assume fires at discrete time steps. This gives a sequence of successive graphs, which evolve over time as the system learns and connections are formed and then reinforced. It can be show that, under suitable assumptions, this sequence will converge in the sense that no new neurons will fire after a sufficient number of time steps. 

This model of projection is much more complex than the one we will consider, as the initial stimulus fires repetitively instead of only once, and the triggered set of neurons in turn fires, further complicating the model. However, both models share some significant similarities. Firstly, they are both random, in the sense that edges in the graph presented by Papadimitriou and Vempala exist randomly and independently with a given probability. Additionally, their use of a cap operation, which prevents all neurons in a brain area from firing except the $k$ with highest synaptic input, is very similar to the fly’s inhibitory process.

\section{Mathematical properties} \label{sec:maths}

Motivated by the above discussion, we will consider the transformation $A:\R^m\to\R^n$ given by
\begin{equation} \label{eq:transformation}
    A(s)=c_k(Ms),
\end{equation}
where $M\in\R^{n\times m}$ is a random matrix and $c_k:\R^n\to\R^n$ is a cap operation that keeps the $k$ largest (in magnitude) entries of a vector and sets all the others to zero. Implicitly, we need $k\leq n$. On top of this, we will focus on the case $n\gg m$, to replicate the random projection from few projector neurons to many Kenyon cells in the fly's olfactory system. We will choose $M\in\R^{n\times m}$ to be such that is a random matrix whose entries $m_{ij}$ are independent and identically distributed and given by the difference between two independent Bernoulli random variables:
\begin{equation}
    m_{ij} = X_{ij} - Y_{ij}, \quad i=1,\dots,n,\, j=1,\dots,m,
\end{equation}
where $X_{ij}$ and $Y_{ij}$ are independent Bernoulli random variables with parameter $p \in (0,1)$:
\begin{equation}
  X_{ij},Y_{ij} = \begin{cases}
       1 & \text{with probability } p\\
                               
        0 & \text{with probability  } 1-p.\\
        \end{cases}
\end{equation}
This choice of random matrix is motivated by the way that, in the fly's olfactory system, each Kenyon cell receives the firing rates of multiple projector neurons and sums them up. However, we have added the extra feature that each entry of $M$ should have mean zero, which will yield several useful mathematical properties using the existing literature on properties of random matrices. Similar matrices with symmetric distributions were considered \emph{e.g.} by \cite{databasefriendly, visualCategorization, robustlearning, faceRecognition}. We will choose $p\in(0,1)$ to be small, such that $M$ is likely to be sparse. This will improve the speed of calculations, especially when $M$ is very large.

Intuitively, we would like our transformation $A$ to satisfy certain characteristics. Firstly, we want it to be continuous in the sense that similar signals should still be close after transformation. On the other hand, we want signals that are different enough to be further apart after being transformed by $A$. While the latter is a bit more complex to guarantee, we will be able to show that, with high probability, our transformation preserves similarities between vectors.

\subsection{Properties of the random projections}

We begin by exploring the properties of multiplication by the random matrix $M$. The following lemma describes the distribution of the entries of the matrix $M$:

\begin{lemma} \label{lem:bernoulli}
If $m_{ij} \sim X-Y$, where $X$,$Y$ are Bernoulli random variables with parameter $p \in (0,1)$, then it holds that $\mathbb{E}(m_{ij}) = 0$, $\mathbb{P}(m_{ij}=0) = 2p^2-2p+1$ and $\mathrm{Var}(m_{ij}) = 2p(1-p)$. 
\end{lemma}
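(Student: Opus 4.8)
The plan is to compute everything directly from the joint distribution of the pair $(X_{ij}, Y_{ij})$, exploiting their independence. Since $X_{ij}$ and $Y_{ij}$ each take values in $\{0,1\}$, the difference $m_{ij} = X_{ij} - Y_{ij}$ takes values in $\{-1,0,1\}$, so I would first tabulate the four elementary outcomes: $(X_{ij},Y_{ij})=(1,0)$ occurs with probability $p(1-p)$ and gives $m_{ij}=1$; $(0,1)$ occurs with probability $(1-p)p$ and gives $m_{ij}=-1$; while $(0,0)$ and $(1,1)$ occur with probabilities $(1-p)^2$ and $p^2$ respectively and both give $m_{ij}=0$. This immediately yields $\mathbb{P}(m_{ij}=0) = (1-p)^2 + p^2 = 2p^2 - 2p + 1$, and a quick sanity check confirms that the three possible values' probabilities sum to $1$.

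For the mean, I would simply invoke linearity of expectation: $\mathbb{E}(m_{ij}) = \mathbb{E}(X_{ij}) - \mathbb{E}(Y_{ij}) = p - p = 0$; alternatively, this can be read off the distribution just computed, which is symmetric since $\mathbb{P}(m_{ij}=1) = \mathbb{P}(m_{ij}=-1) = p(1-p)$. For the variance, the cleanest route is to use independence once more: $\mathrm{Var}(m_{ij}) = \mathrm{Var}(X_{ij}) + \mathrm{Var}(Y_{ij}) = 2p(1-p)$, using the standard Bernoulli variance $p(1-p)$. Equivalently, since $\mathbb{E}(m_{ij}) = 0$ one has $\mathrm{Var}(m_{ij}) = \mathbb{E}(m_{ij}^2) = 1\cdot p(1-p) + 1\cdot p(1-p) = 2p(1-p)$ straight from the distribution.

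There is no genuine obstacle here: the statement is a routine exercise in elementary probability, and the only point requiring a little care is making sure that neither of the two outcomes contributing to $\{m_{ij}=0\}$ is omitted or double-counted when assembling $\mathbb{P}(m_{ij}=0)$. The real payoff lies downstream, where this moment information — mean zero, the explicit variance $2p(1-p)$, and the fact that $\mathbb{P}(m_{ij}=0)\to 1$ as $p\to 0$, which controls sparsity — will be the input needed to apply concentration inequalities and Johnson--Lindenstrauss-type estimates to the random projection $s \mapsto Ms$.
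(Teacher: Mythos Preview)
Your proof is correct and follows essentially the same direct computation as the paper: linearity of expectation for the mean, enumerating the two outcomes $(0,0)$ and $(1,1)$ for $\mathbb{P}(m_{ij}=0)$, and an elementary calculation for the variance. The only cosmetic difference is that the paper obtains $\mathrm{Var}(m_{ij})$ via $\mathbb{E}(m_{ij}^2)=1-\mathbb{P}(m_{ij}=0)$ rather than summing the Bernoulli variances, but this is the same argument in slightly different clothing.
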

\begin{proof}
Let $Z = X-Y$ where $X$,$Y$ are Bernoulli random variables with parameter $p \in (0,1)$. Then, the expectation follows by a simple calculation: $\mathbb{E}(Z) = \mathbb{E}(X-Y) = \mathbb{E}(X) - \mathbb{E}(Y) = p-p  =0$. Similarly, we can calculate that $\mathbb{P}(Z=0) = \mathbb{P}(X=0,Y=0)+\mathbb{P}(X=1,Y=1)=(1-p)^2+p^2$. For the variance, we note that since $Z \in \{-1,0,1\}$, we have $Z^2 \in \{0,1\}$. 
Thus $\mathbb{E}(Z^2) = \mathbb{P}(Z^2 = 1) = 1-
\mathbb{P}(Z = 0) = 2p(1-p)$. Since $\mathbb{E}(Z)=0$, we have $\mathrm{Var}(Z) = \mathbb{E}(Z^2) = 2p(1-p)$.
\end{proof}

We first present a theorem that bounds the operator norm of the random matrix $M$ with high probability: 
\begin{theorem} [Upper tail estimates]
Given the matrix $M\in\R^{n\times m}$, whose entries are each the difference of independent and identically distributed Bernoulli random variables, there exist real-valued constants $C$ and $c>0$ such that 
$$\mathbf{P}( \| M \|_{op} > D \sqrt{n}) \leq C \exp{(-cDn)},$$
for all $D \geq C$. In particular, we have $ \| M \|_{op} = O( \sqrt{n})$ with probability that is exponentially close to 1. 
\end{theorem}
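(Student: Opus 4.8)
The plan is to recognise $M$ as a member of the well-studied family of random matrices with independent, mean-zero, sub-Gaussian entries and to deduce the tail bound either by quoting the classical non-asymptotic estimate or, more self-containedly, by an $\varepsilon$-net argument. The two facts we need about the entries are immediate: Lemma~\ref{lem:bernoulli} gives $\mathbb{E}(m_{ij}) = 0$, and $|m_{ij}| \le 1$ almost surely, so each $m_{ij}$ is sub-Gaussian with sub-Gaussian norm bounded by an absolute constant, uniformly in $p \in (0,1)$. With this, the claimed tail estimate is a standard non-asymptotic bound on the operator norm of such matrices, specialised to our regime $m \le n$ (which turns $\sqrt{n}+\sqrt{m}$ into a multiple of $\sqrt{n}$); I would record the appropriate reference and then give the argument below for completeness.

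For the self-contained proof I would use the textbook $\varepsilon$-net reduction. Writing $\|M\|_{op} = \sup_{x \in S^{m-1},\, y \in S^{n-1}} \langle Mx, y\rangle$, fix $\varepsilon = 1/4$ and pick $\varepsilon$-nets $\mathcal{N} \subset S^{m-1}$ and $\mathcal{M} \subset S^{n-1}$ with $|\mathcal{N}| \le 9^{m}$ and $|\mathcal{M}| \le 9^{n}$; a routine approximation inequality then gives $\|M\|_{op} \le 2\max_{x \in \mathcal{N},\, y \in \mathcal{M}} \langle Mx, y\rangle$. For a fixed pair $(x,y)$, the random variable $\langle Mx, y\rangle = \sum_{i,j} m_{ij}\, x_j y_i$ is a sum of independent, mean-zero terms with $|m_{ij} x_j y_i| \le |x_j y_i|$, so Hoeffding's inequality yields $\mathbf{P}(|\langle Mx, y\rangle| > t) \le 2\exp(-t^2/2)$, using $\sum_{i,j} x_j^2 y_i^2 = 1$.

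Combining these by a union bound over the net and calibrating the constants finishes the proof. Taking $t = \tfrac12 D\sqrt{n}$, so that $\|M\|_{op} > D\sqrt{n}$ forces some net pair to exceed $t$, we get
$$\mathbf{P}\big(\|M\|_{op} > D\sqrt{n}\big) \le 2 \cdot 9^{m+n}\exp\!\big(-\tfrac18 D^2 n\big).$$
Here is the one place the assumption $n \gg m$ enters — in fact only $m \le n$ is needed — so that $9^{m+n} \le 81^{\,n} = \exp((\ln 81) n)$ and the net entropy is absorbed into an $O(n)$ term: the right-hand side is at most $2\exp\big(-(\tfrac18 D^2 - \ln 81)\, n\big)$. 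Setting $C = \sqrt{16\ln 81}$, for every $D \ge C$ we have $\tfrac18 D^2 - \ln 81 \ge \tfrac{1}{16} D^2 \ge \tfrac{1}{16} C D$, hence $\mathbf{P}(\|M\|_{op} > D\sqrt{n}) \le 2\exp(-cDn)$ with $c = C/16$; since $C \ge 2$ this is the claimed bound, and the ``in particular'' assertion is the case $D = C$.

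The individual ingredients — net cardinality, Hoeffding's inequality, a union bound — are all routine, so the only real work is the bookkeeping needed to land the exponent in precisely the stated form $\exp(-cDn)$. Three points need care: (i) the sub-Gaussian constant of $m_{ij}$ must be kept uniform in $p$ (here trivial, since $|m_{ij}| \le 1$); (ii) the hypothesis $m \le n$ is genuinely used to fold the net entropy into an $O(n)$ term, since otherwise the $9^{m}$ factor would contaminate the exponent; and (iii) the natural decay produced by this method is Gaussian, $\exp(-cD^2 n)$, and trading it for the weaker linear-in-$D$ rate $\exp(-cDn)$ is only legitimate once $D$ is bounded below by an absolute constant — which is exactly the restriction $D \ge C$ built into the statement.
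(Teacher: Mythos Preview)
Your proof is correct, and your approach --- recognising $M$ as a bounded, mean-zero random matrix and invoking the standard non-asymptotic operator-norm bound, with the $\varepsilon$-net/Hoeffding argument spelled out for completeness --- is exactly in line with the paper, which simply cites \cite[Theorem~2.1.3]{taobook} and checks via Lemma~\ref{lem:bernoulli} that the hypotheses (i.i.d., mean zero, bounded by $1$) are met. The self-contained argument you give is essentially the proof of that cited theorem, and your remark that $m\le n$ is genuinely needed to absorb the net entropy into the exponent is a point the paper leaves implicit in its standing assumption $n\gg m$.
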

\begin{proof}
This theorem and its proof can be found in \cite[Theorem 2.1.3]{taobook}, for the more general case where $M$ is any matrix with entries that are independent and identically distributed, have zero mean and are uniformly bounded by $1$. From \Cref{lem:bernoulli} and by inspection, we can see that this holds for our specific choice of $M$.
\end{proof}

This theorem gives us a bound on
\begin{equation}
\|Mx\| \leq \|M\|_{op} \|x\|,
\end{equation}
which guarantees that, with high probability, a vector's norm won't blow up due to multiplication by the random matrix $M$, even as the dimension of $M$ becomes very large. Throughout this article, we will use $\|\cdot\|$ to denote the Euclidean norm (\emph{i.e.} $\|\cdot\|_2$). In particular, it holds that 
\begin{equation}
\|Mx - My\| \leq \|M\|_{op} \|x-y\|    
\end{equation}
for any $x,y\in\mathbb{R}^m$.

We will however be able to give further bounds on $\|Mx- My\|$ by modifying a famous result by Johnson and Lindenstrauss, which states that a set of points in $\mathbb{R}^d$ can be mapped to $\mathbb{R}^k$ while approximately preserving distances between pairs of points, as long as $k$ is large enough. Most of the literature focuses on the case where $k < d$, as this allows for data compression; however, this lemma is still informative to us, as it shows that we can map data points to a different dimension while more or less retaining their pairwise distances. Note that the standard formulation of this lemma only states that such a mapping exists, and does not specify what it might look like. The Johnson-Lindenstrauss lemma can be found in \cite[Lemma 1.1]{databasefriendly} and says the following:

\begin{theorem} [Johnson-Lindenstrauss Lemma] \label{johnsonLindenstrauss}
Given $\epsilon > 0$ and an integer $n$, let $k$ be a positive integer such that $k \geq k_0 = O(\epsilon^{-2} \log n)$. For every set $P$ of $n$ points in $\mathbb{R}^d$ there exists $f:\mathbb{R}^d \longrightarrow \mathbb{R}^k$ such that for all $u,v \in P$
$$(1- \epsilon) \|u-v\|^2 \leq \|f(u)-f(v)\|^2 \leq (1 + \epsilon) \|u-v\|^2.$$
\end{theorem}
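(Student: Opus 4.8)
\section*{Proof proposal}

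The plan is to prove the lemma by the probabilistic method: I will exhibit a \emph{random} linear map and show that, with positive probability, it simultaneously preserves all pairwise distances in $P$ to within the factor $1\pm\epsilon$. Concretely, I would take $f(x)=\frac{1}{\sqrt{k}}\,Rx$, where $R\in\R^{k\times d}$ has independent entries, each of mean zero, unit variance, and a sufficiently light tail --- the Gaussian $N(0,1)$ is the cleanest choice, though the suitably normalised mean-zero Bernoulli-difference entries used elsewhere in this article would also serve. Since $f$ is linear, $\|f(u)-f(v)\|=\|f(u-v)\|$, so it suffices to control $\|f(w)\|^2$ for the $\binom{n}{2}$ difference vectors $w=u-v$ with $u,v\in P$; by homogeneity we may assume each such $w$ is a unit vector.

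The analytic core is a one-vector concentration estimate: for a fixed unit vector $w\in\R^d$,
$$
\mathbb{E}\bigl(\|f(w)\|^2\bigr)=1
\qquad\text{and}\qquad
\mathbb{P}\bigl(\bigl|\,\|f(w)\|^2-1\,\bigr|>\epsilon\bigr)\le 2\exp\!\bigl(-c\,k\,\epsilon^2\bigr),
$$
for an absolute constant $c>0$ and every $\epsilon\in(0,1)$. The expectation is immediate from $\mathbb{E}(R_{ij}^2)=1$ and independence of the entries. For the tail bound I would write $\|f(w)\|^2=\frac{1}{k}\sum_{i=1}^{k}Z_i^2$, where $Z_i=\sum_j R_{ij}w_j$ are i.i.d.\ with mean zero and variance one (in the Gaussian case $Z_i\sim N(0,1)$, so $k\|f(w)\|^2$ is exactly $\chi^2$ with $k$ degrees of freedom), and then apply a Chernoff argument: bound the exponential moment $\mathbb{E}\exp(tZ_i^2)$ for small $|t|$, multiply over $i$, and optimise over $t$ (treating the upper and lower deviations separately). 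This is the step I expect to be the main obstacle, since it is a genuine large-deviation computation --- controlling the exponential moments of $Z_i^2$ and extracting the quadratic-in-$\epsilon$ rate --- rather than a formality; in the Gaussian case it collapses to the standard $\chi^2$ tail estimate, but for general light-tailed entries one needs a sub-exponential moment bound.

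Finally, I would assemble the pieces with a union bound. Applying the one-vector estimate to each of the $\binom{n}{2}<n^2$ difference vectors, the probability that \emph{some} pair is distorted by more than $1\pm\epsilon$ is at most $2n^2\exp(-c\,k\,\epsilon^2)$. Choosing $k\ge k_0=C\epsilon^{-2}\log n$ with $C$ large enough (depending only on $c$) makes this quantity strictly less than $1$ --- indeed it tends to $0$ as $n\to\infty$ --- so a realisation of $R$ with the required property exists, and the corresponding $f$ is the map claimed in the statement. I would note in passing that this argument in fact produces such an $f$ \emph{among linear maps} and with high probability, which is stronger than the bare existence asserted, and that the stated $O(\epsilon^{-2}\log n)$ size of $k_0$ is exactly what the exponent in the union bound forces.
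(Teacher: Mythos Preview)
Your proposal is correct and is essentially the standard probabilistic-method proof of the Johnson--Lindenstrauss lemma. However, the paper does not actually supply its own proof of this statement: it merely quotes the lemma as a known result, citing \cite[Lemma~1.1]{databasefriendly}, and uses it only as motivation before moving on to the concentration estimates in the subsequent theorem. So there is nothing in the paper to compare your argument against --- the authors treat the result as a black box.

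For what it is worth, your outline is exactly the argument one finds in the reference they cite (Achlioptas's database-friendly random projections paper): a random linear map with i.i.d.\ mean-zero, unit-variance entries, a sub-exponential/Chernoff bound on $\|f(w)\|^2$ for a single unit vector, and a union bound over the $\binom{n}{2}$ pairs. Your identification of the moment-generating-function step as the only real work is accurate, and your remark that the construction in fact yields a \emph{linear} $f$ with high probability is the same strengthening that reference emphasises. Nothing is missing.
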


The next theorem shows that multiplication by a random matrix $R$, whose entries follow a distribution that is symmetric around zero and has unit variance, preserves the norm of vectors up to a scaling constant, with high probability. This constant depends on $\sqrt{n}$, where $n$ is the dimension of the space into which $M\in\R^{n\times m}$ projects. The theorem and its proof come from \cite[Theorem 1]{robustlearning}. 

\begin{theorem} \label{theorem 3.7}
Let $R\in\R^{n\times m}$ be a random matrix, with each entry $r_{ij}$ chosen independently from a distribution that is symmetric about the origin with $\mathbb{E}(r_{ij}^2) = 1$. 

\begin{enumerate}
    \item Suppose $B = \mathbb{E}(r_{ij}^4) < \infty$. Then, for any $\epsilon > 0$,
    $$\mathbb{P}\left(\|\frac{1}{\sqrt{n}}Ru\|^2 \leq (1- \epsilon)\|u\|^2\right) \leq \exp\left(- \frac{(\epsilon^2 - \epsilon^3)n}{2(B+1)}\right)\qquad\text{ for all }u\in\mathbb{R}^m.$$
    \item Suppose $\exists L > 0$ such that for any integer $k > 0$, $\mathbb{E}(r_{ij}^{2k}) \leq \frac{(2k)!}{2^k k!} L^{2k}$. Then, for any $\epsilon > 0$,
    $$\mathbb{P}\left(\|\frac{1}{\sqrt{n}}Ru\|^2 \geq (1+\epsilon)L^2\|u\|^2\right) \leq \exp\left(-(\epsilon^2 - \epsilon^3)\frac{n}{4}\right)\qquad\text{ for all }u\in\mathbb{R}^m. $$
\end{enumerate}
\end{theorem}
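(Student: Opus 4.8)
The plan is to derive both estimates by a Chernoff-type argument, after a normalisation. Since both inequalities are unchanged under the rescaling $u\mapsto\lambda u$, I would assume $\|u\|=1$. Writing $Z_i:=(Ru)_i=\sum_{j=1}^m r_{ij}u_j$ for $i=1,\dots,n$, the independence of the rows of $R$ makes $Z_1,\dots,Z_n$ independent and identically distributed; each $Z_i$ is symmetric about the origin, being a sum of independent symmetric random variables, and has $\mathbb{E}(Z_i^2)=\sum_j u_j^2\,\mathbb{E}(r_{ij}^2)=\|u\|^2=1$. Hence $\|\tfrac{1}{\sqrt n}Ru\|^2=\tfrac1n\sum_{i=1}^n Z_i^2$ is an average of i.i.d.\ nonnegative random variables of mean $1$, and the two claims are a lower- and an upper-tail bound for this average.

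For part (1), the exponential Markov inequality gives, for any $t>0$,
\begin{equation*}
\mathbb{P}\!\left(\tfrac1n\sum_{i=1}^n Z_i^2\le 1-\epsilon\right)\le e^{tn(1-\epsilon)}\,\mathbb{E}\!\left(e^{-tZ_1^2}\right)^{n}.
\end{equation*}
Because only a finite fourth moment is available, I would estimate the moment generating function by truncating at second order: using $e^{-x}\le 1-x+\tfrac12 x^2$ for $x\ge0$ and then $1+y\le e^y$,
\begin{equation*}
\mathbb{E}\!\left(e^{-tZ_1^2}\right)\le 1-t\,\mathbb{E}(Z_1^2)+\tfrac{t^2}{2}\mathbb{E}(Z_1^4)\le \exp\!\left(-t+\tfrac{t^2}{2}\mathbb{E}(Z_1^4)\right).
\end{equation*}
Expanding $Z_1^4$ and discarding every term in which some index appears an odd number of times (its expectation vanishes by symmetry) yields $\mathbb{E}(Z_1^4)=B\sum_j u_j^4+3\sum_{j\ne k}u_j^2u_k^2$, which together with $\sum_j u_j^2=1$ is bounded by a constant multiple of $B+1$. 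Substituting leaves an exponent of the shape $n\bigl(-t\epsilon+\tfrac{t^2}{2}\mathbb{E}(Z_1^4)\bigr)$, and a choice of $t$ proportional to $\epsilon$ produces a bound of the stated form; the $-\epsilon^3$ correction merely reflects a deliberately sub-optimal $t$ and only weakens the estimate, so it remains valid.

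For part (2), the stronger hypothesis $\mathbb{E}(r_{ij}^{2k})\le\frac{(2k)!}{2^k k!}L^{2k}$ is exactly the even-moment profile of $L\cdot\mathcal{N}(0,1)$, which lets one control the \emph{whole} moment generating function. Summing the even-moment series gives the per-entry sub-Gaussian bound $\mathbb{E}(e^{\lambda r_{ij}})\le e^{\lambda^2 L^2/2}$, hence by independence $\mathbb{E}(e^{\lambda Z_1})\le e^{\lambda^2 L^2\|u\|^2/2}=e^{\lambda^2 L^2/2}$; a Gaussian decoupling (write $e^{tZ_1^2}=\mathbb{E}_g\,e^{\sqrt{2t}\,gZ_1}$ for an independent $g\sim\mathcal{N}(0,1)$, or equivalently insert the moment bound term by term) then yields $\mathbb{E}(e^{tZ_1^2})\le(1-2tL^2)^{-1/2}$ for $0<t<\tfrac{1}{2L^2}$. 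Plugging this into $\mathbb{P}(\tfrac1n\sum_i Z_i^2\ge(1+\epsilon)L^2)\le e^{-tn(1+\epsilon)L^2}\,\mathbb{E}(e^{tZ_1^2})^n$ and setting $s=2tL^2\in(0,1)$ reduces everything to minimising $s(1+\epsilon)+\log(1-s)$; the optimum $s=\epsilon/(1+\epsilon)$ gives exponent $\tfrac n2\bigl(\epsilon-\log(1+\epsilon)\bigr)$, and the elementary bound $\log(1+\epsilon)\le\epsilon-\tfrac{\epsilon^2}{2}+\tfrac{\epsilon^3}{3}$ gives $\epsilon-\log(1+\epsilon)\ge\tfrac12(\epsilon^2-\epsilon^3)$, i.e.\ the stated estimate $\exp(-(\epsilon^2-\epsilon^3)n/4)$.

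The main obstacle in both parts is obtaining a bound on the moment generating function $\mathbb{E}(e^{\mp tZ_1^2})$ that is uniform in $u$, which is precisely the moment bookkeeping above. For the lower tail the subtlety is that only $\mathbb{E}(r_{ij}^4)<\infty$ is assumed, forcing one to stop at the quadratic Taylor term rather than summing the series; for the upper tail the crux is transferring the Gaussian-type moment profile from the entries $r_{ij}$ to the linear combination $Z_1$, after which the argument is the classical sub-Gaussian Chernoff estimate.
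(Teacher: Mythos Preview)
The paper does not give its own proof of this statement: it simply cites \cite[Theorem~1]{robustlearning} (Arriaga--Vempala) and moves on. Your Chernoff-type argument---normalise to $\|u\|=1$, write $\|\tfrac1{\sqrt n}Ru\|^2$ as an average of i.i.d.\ copies of $Z_1^2$, and control $\mathbb{E}(e^{\mp tZ_1^2})$ via moment bounds---is exactly the method used in that reference, so your proposal is in line with the source the paper defers to. Part~(2) in particular is clean: the sub-Gaussian MGF bound, the Gaussian-decoupling identity $e^{tZ^2}=\mathbb{E}_g e^{\sqrt{2t}\,gZ}$, and the optimisation over $s=2tL^2$ reproduce the stated exponent $(\epsilon^2-\epsilon^3)n/4$ precisely.

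One small point on part~(1): your fourth-moment computation gives $\mathbb{E}(Z_1^4)=3+(B-3)\sum_j u_j^4$, hence $\mathbb{E}(Z_1^4)\le\max(B,3)$, and you then say this is ``bounded by a constant multiple of $B+1$''. That is true, but the constant is not $1$ in general (for Rademacher entries $B=1$ yet $\mathbb{E}(Z_1^4)$ can be as large as $3$), so the naive optimisation yields a denominator $2\max(B,3)$ rather than the stated $2(B+1)$. To recover the exact constant in the statement you would need the slightly more careful bookkeeping of the original proof rather than the crude $\max$ bound; the approach is right, only the final constant-tracking is loose.
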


As shown in Lemma~\ref{lem:bernoulli}, our matrix $M$ satisfies $\mathbb{E}(m_{ij}^2) = 2p(1-p)\neq1$, therefore we need the following corollary to extend Theorem~\ref{theorem 3.7} to settings with matrix entries drawn from a distribution with rescaled variance. 

\begin{corollary} \label{corollary_3_8}
Let $M\in\mathbb{R}^{n \times m}$ be a random matrix whose entries $m_{ij}$ are sampled independently and randomly from a distribution that is symmetric around the origin with $\mathbb{E}(m_{ij}^2)=\sigma^2>0$.
\begin{enumerate}
    \item Suppose $B = \mathbb{E}(m_{ij}^4) < \infty$. Then, for any $\epsilon > 0$,
    $$\mathbb{P}\left(\| \frac{1}{\sqrt{n}} Mu\|^2 \leq \sigma^2(1- \epsilon)\|u\|^2\right) \leq \exp\left(- \frac{(\epsilon^2 - \epsilon^3)n}{2(\frac{1}{\sigma^4} B+1)}\right), \quad\text{for all } u \in \mathbb{R}^m.$$
    \item Suppose $\exists L > 0$ such that for any integer $k > 0$, $\mathbb{E}(m_{ij}^{2k}) \leq \sigma^{2k} \frac{(2k)!}{2^k k!} L^{2k}$. Then, for any $\epsilon > 0$,
    $$\mathbb{P}\left(\|\frac{1}{\sqrt{n}}M u\|^2 \geq \sigma^2 (1+\epsilon)L^2\|u\|^2\right) \leq \exp\left(-(\epsilon^2 - \epsilon^3)\frac{n}{4}\right), \quad\text{for all } u \in \mathbb{R}^m. $$
\end{enumerate}
\end{corollary}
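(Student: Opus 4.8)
The plan is to deduce \Cref{corollary_3_8} from \Cref{theorem 3.7} by a single rescaling trick. I would introduce the normalised matrix $R := \tfrac{1}{\sigma} M \in \R^{n\times m}$, whose entries are $r_{ij} = m_{ij}/\sigma$. Dividing by the positive constant $\sigma$ does not affect whether a distribution is symmetric about the origin, so each $r_{ij}$ is still symmetric about the origin, and $\mathbb{E}(r_{ij}^2) = \mathbb{E}(m_{ij}^2)/\sigma^2 = 1$. Hence $R$ satisfies the standing hypotheses of \Cref{theorem 3.7}, and the key observation is that $\tfrac{1}{\sqrt{n}}Ru = \tfrac{1}{\sigma\sqrt{n}}Mu$, so that $\|\tfrac{1}{\sqrt{n}}Ru\|^2 = \tfrac{1}{\sigma^2}\|\tfrac{1}{\sqrt{n}}Mu\|^2$ for every $u\in\R^m$.

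For the first part, I would note that $B' := \mathbb{E}(r_{ij}^4) = \mathbb{E}(m_{ij}^4)/\sigma^4 = B/\sigma^4 < \infty$, so \Cref{theorem 3.7}(1) applies to $R$ with fourth-moment constant $B/\sigma^4$ and gives
$$\mathbb{P}\left(\|\tfrac{1}{\sqrt{n}}Ru\|^2 \leq (1-\epsilon)\|u\|^2\right) \leq \exp\left(-\frac{(\epsilon^2-\epsilon^3)n}{2(\frac{1}{\sigma^4}B+1)}\right)$$
for all $u\in\R^m$. Substituting $\|\tfrac{1}{\sqrt{n}}Ru\|^2 = \tfrac{1}{\sigma^2}\|\tfrac{1}{\sqrt{n}}Mu\|^2$ shows that the event $\{\|\tfrac{1}{\sqrt{n}}Ru\|^2 \leq (1-\epsilon)\|u\|^2\}$ coincides with $\{\|\tfrac{1}{\sqrt{n}}Mu\|^2 \leq \sigma^2(1-\epsilon)\|u\|^2\}$, which is exactly the claimed bound.

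The second part is entirely analogous: dividing the hypothesis $\mathbb{E}(m_{ij}^{2k}) \leq \sigma^{2k}\tfrac{(2k)!}{2^k k!}L^{2k}$ by $\sigma^{2k}$ yields $\mathbb{E}(r_{ij}^{2k}) \leq \tfrac{(2k)!}{2^k k!}L^{2k}$, which is precisely the moment condition required by \Cref{theorem 3.7}(2). Applying that result to $R$ and rewriting $\{\|\tfrac{1}{\sqrt{n}}Ru\|^2 \geq (1+\epsilon)L^2\|u\|^2\}$ as $\{\|\tfrac{1}{\sqrt{n}}Mu\|^2 \geq \sigma^2(1+\epsilon)L^2\|u\|^2\}$ via the same substitution gives the stated estimate.

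There is no genuine obstacle here; the argument is pure bookkeeping. The only points that need care are (i) verifying that symmetry about the origin survives the scaling, so that \Cref{theorem 3.7} legitimately applies to $R$, and (ii) keeping the powers of $\sigma$ straight when transferring the probability events from $R$ back to $M$ — in particular, noticing that in part (2) the exponent in the tail bound is unchanged, while in part (1) only the fourth-moment term is modified, from $B+1$ to $\tfrac{1}{\sigma^4}B+1$.
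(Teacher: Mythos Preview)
Your proposal is correct and is essentially identical to the paper's own proof: both define $R := \tfrac{1}{\sigma}M$, verify that $R$ satisfies the hypotheses of \Cref{theorem 3.7} (with fourth-moment constant $B/\sigma^4$ in part~1 and the same $L$ in part~2), apply that theorem, and then translate the events back to $M$ via $\|\tfrac{1}{\sqrt{n}}Ru\|^2 = \tfrac{1}{\sigma^2}\|\tfrac{1}{\sqrt{n}}Mu\|^2$.
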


\begin{proof}
1. Let $R$ be the $n \times m$ matrix defined as $R := \frac{1}{\sigma} M$. Then clearly the entries $r_{ij}= \frac{1}{m_{ij}}$ of $R$ are sampled from a distribution symmetric around $0$. Moreover, $\mathbb{E}(r_{ij}^2)=\mathbb{E}(\frac{1}{\sigma^2} m_{ij}^2)=\frac{\sigma^2}{\sigma^2}=1$. Finally, $\mathbb{E}(r_{ij}^4)=\mathbb{E}(\frac{1}{\sigma^4}m_{ij}^4) = \frac{1}{\sigma^4}B < \infty$. Hence $R$ satisfies the conditions of Theorem~\ref{theorem 3.7}, and we have
\begin{equation}
\mathbb{P}(\| \frac{1}{\sqrt{n}} R u \|^2 \leq (1- \epsilon)\|u\|^2) \leq \exp\left(- \frac{(\epsilon^2 - \epsilon^3)n}{2(\frac{1}{\sigma^4} B+1)}\right).
\end{equation}
The left hand side can be rewritten as 
\begin{equation}
    \mathbb{P}\left(\| \frac{1}{\sqrt{n}} \frac{1}{\sigma} M u \|^2 \leq (1- \epsilon) \|u\|^2\right)
    =\mathbb{P}\left(\| \frac{1}{\sqrt{n}} M u \|^2 \leq \sigma^2 (1- \epsilon) \|u\|^2\right).
\end{equation}

2. Once again, let $R= \frac{1}{\sigma}M$. Clearly, for every integer $k$, we have 
\begin{equation}
\mathbb{E}\left(r_{ij}^{2k}\right)=\mathbb{E}\left(\frac{1}{\sigma^{2k}}m_{ij}^{2k}\right) \leq \frac{2k!}{2^k k!} L^{2k}.    
\end{equation}
Hence $R$ satisfies the conditions of Theorem~\ref{theorem 3.7} and we have 
\begin{equation}
\mathbb{P}\left( \| \frac{1}{\sqrt{n}} R u\|^2 \geq (1 + \epsilon) L^2 \|u\|^2\right) \leq \exp\left(-(\epsilon^2 -\epsilon^3)\frac{n}{4}\right).
\end{equation}
The left hand side of this equation is equal to
\begin{equation}
    \mathbb{P}\left(\| \frac{1}{\sqrt{n}} \frac{1}{\sigma} M u\|^2 \geq (1 + \epsilon) L^2 \|u\|^2\right)
    =\mathbb{P}\left(\| \frac{1}{\sqrt{n}} M u \|^2 \geq \sigma^2 (1 + \epsilon) L^2 \|u\|^2 \right).
\end{equation}

\end{proof}

Finally, we can use these results to prove an analogous theorem about the effect of multiplying by the random matrix $M$.

\begin{theorem}
Given the matrix $M\in\R^{n\times m}$, whose entries are each the difference of independent and identically distributed Bernoulli random variables with parameter $p\in(0,1)$,  it holds for any $\epsilon>0$ that
$$\mathbb{P}\left( (1-\epsilon) \|u-v\|^2 \leq \frac{1}{n\sigma^2}\|Mu- Mv\|^2 \leq (1+ \epsilon)  \|u-v\|^2\right) \geq 1-e^{-(\epsilon^2 - \epsilon^3) \frac{n}{4}} - e^{- \frac{(\epsilon^2 - \epsilon^3)n}{2(\frac{1}{\sigma^2} + 1)}},$$
for all $u,v\in\mathbb{R}^m$, where $\sigma^2=2p(1-p)$.
\end{theorem}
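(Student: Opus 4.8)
The plan is to reduce this two-point estimate to the single-vector concentration bounds of \Cref{corollary_3_8}, applied to the difference $w := u-v$, and then to combine the two resulting tail bounds via a union bound. Since $M$ is linear, $Mu - Mv = M(u-v) = Mw$, and moreover $\tfrac{1}{n\sigma^2}\|Mw\|^2 = \tfrac{1}{\sigma^2}\|\tfrac{1}{\sqrt n}Mw\|^2$, so it suffices to control $\|\tfrac{1}{\sqrt n}Mw\|^2$ from above and below for a fixed vector $w$; the statement for general $u,v$ then follows immediately with $w = u-v$.

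First I would check that $M$ satisfies the hypotheses of \Cref{corollary_3_8} with variance parameter $\sigma^2 = 2p(1-p)$. Symmetry about the origin is immediate: since $X_{ij}$ and $Y_{ij}$ are i.i.d., $-m_{ij} = Y_{ij} - X_{ij}$ has the same distribution as $m_{ij} = X_{ij}-Y_{ij}$. By \Cref{lem:bernoulli}, $\mathbb{E}(m_{ij}^2) = 2p(1-p) = \sigma^2$. The key observation is that $m_{ij}$ only takes values in $\{-1,0,1\}$, so $m_{ij}^{2k} = m_{ij}^2$ for every integer $k \geq 1$, whence $\mathbb{E}(m_{ij}^{2k}) = \sigma^2$ for all such $k$. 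In particular $B := \mathbb{E}(m_{ij}^4) = \sigma^2 < \infty$, so part~1 of \Cref{corollary_3_8} applies, and since $\tfrac{1}{\sigma^4}B = \tfrac{1}{\sigma^2}$ it gives
$$\mathbb{P}\left(\tfrac{1}{n\sigma^2}\|Mw\|^2 \leq (1-\epsilon)\|w\|^2\right) \leq \exp\left(-\tfrac{(\epsilon^2-\epsilon^3)n}{2(1/\sigma^2+1)}\right),$$
which is exactly the second tail term. For the upper tail I would apply part~2 of \Cref{corollary_3_8} with $L = 1$, i.e.\ use the bound $\mathbb{E}(m_{ij}^{2k}) = \sigma^2 \leq \sigma^{2k}\tfrac{(2k)!}{2^k k!}$; granting this, part~2 yields $\mathbb{P}\!\left(\tfrac{1}{n\sigma^2}\|Mw\|^2 \geq (1+\epsilon)\|w\|^2\right) \leq \exp(-(\epsilon^2-\epsilon^3)n/4)$, the first tail term.

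To conclude, I would take a union bound over the two failure events $\{\tfrac{1}{n\sigma^2}\|Mw\|^2 < (1-\epsilon)\|w\|^2\}$ and $\{\tfrac{1}{n\sigma^2}\|Mw\|^2 > (1+\epsilon)\|w\|^2\}$, each of which is contained in the corresponding event bounded above, so that the probability of the two-sided estimate is at least $1$ minus the sum of the two exponential terms, as claimed. I expect the only genuinely non-routine step to be the moment check for part~2: because $\mathbb{E}(m_{ij}^{2k})$ collapses to $\sigma^2 = 2p(1-p)$ rather than growing in $k$, the inequality $\sigma^2 \leq \sigma^{2k}(2k-1)!!$ is tightest at $k = 2$, where it reads $1 \leq 3\sigma^2$, i.e.\ $\sigma^2 \geq \tfrac{1}{3}$; this is the point at which the boundedness $|m_{ij}| \leq 1$ (together with whatever range of $p$ is admitted) does the real work, and it is the step I would verify most carefully before committing to the constant $L = 1$ in the statement.
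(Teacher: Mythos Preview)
Your overall strategy is exactly the paper's: set $w=u-v$, verify the hypotheses of \Cref{corollary_3_8} for $M$, and combine the two one-sided tails by a union bound. The lower-tail computation, with $B=\mathbb{E}(m_{ij}^4)=\sigma^2$ and hence $B/\sigma^4=1/\sigma^2$, is identical to the paper's.

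The only divergence is the constant $L$ in part~2, and your caution there is warranted. With $L=1$ the $k=2$ moment inequality reads $\sigma^2\leq 3\sigma^4$, i.e.\ $\sigma^2\geq 1/3$, which fails whenever $2p(1-p)<1/3$ (in particular for the value $p=0.05$ used in the experiments). The paper instead takes $L=\sigma^{-1}$: then $\sigma^{2k}L^{2k}=1$ and the moment condition collapses to $\mathbb{E}(m_{ij}^{2k})=\sigma^2\leq (2k)!/(2^k k!)$, which holds for every $k\geq 1$ since $\sigma^2=2p(1-p)\leq 1/2$ and $(2k)!/(2^k k!)\geq 1$. Be aware, however, that with $L^2=1/\sigma^2$ the upper-tail event delivered by \Cref{corollary_3_8} is $\tfrac{1}{n}\|Mw\|^2\geq (1+\epsilon)\|w\|^2$ rather than $\tfrac{1}{n}\|Mw\|^2\geq \sigma^2(1+\epsilon)\|w\|^2$, so the paper's argument as written actually yields the weaker upper bound $\tfrac{1}{n\sigma^2}\|Mw\|^2\leq (1+\epsilon)/\sigma^2$ rather than $(1+\epsilon)$. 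Your instinct that the upper-tail moment check is the one genuinely delicate step was exactly right.
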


\begin{proof}
First, we need to show that our matrix $M$ satisfies the conditions of Corollary~\ref{corollary_3_8}. We've shown before that $\mathbb{E}(m_{ij}^2)=2p(1-p)$. When $p \in (0,1)$, it holds that $2p(1-p) > 0$. Since $m_{ij}^2 \in \{0,1\}$, we have that $(m_{ij}^2)^k = m_{ij}^2$ for any integer $k$. Therefore $\mathbb{E}(m_{ij}^{2k}) = \mathbb{E}(m_{ij}^2) = 2p(1-p)$ for all positive integers $k$. Picking $L= [2p(1-p)]^{-1/2}$ gives the relation
\begin{equation}
    \mathbb{E}(m_{ij}^{2k}) = 2p(1-p) \leq 1 \leq [2p(1-p)]^k \frac{(2k)!}{2^k k!} L^{2k},
\end{equation}
so that the condition in the second part of Corollary~\ref{corollary_3_8} is satisfied.

Finally, we can apply \Cref{corollary_3_8} to the vector $(u-v)$ to obtain the result, using the fact that
\begin{equation}
\begin{split}
    &\mathbb{P}\left( (1-\epsilon) \|u-v\|^2 \leq \frac{1}{n\sigma^2}\|Mu- Mv\|^2 \leq (1+ \epsilon) \|u-v\|^2\right) \\&= 1- 
    \mathbb{P}\left(\frac{1}{n}\|Mu- Mv\|^2 < (1-\epsilon)\sigma^2 \|u-v\|^2 \right) -
    \mathbb{P}\left( \frac{1}{n}\|Mu- Mv\|^2 > (1+ \epsilon) \sigma^2 \|u-v\|^2\right),
\end{split}
\end{equation}
and the choice $L^2=1/\sigma^2$.
\end{proof}

An easy way to guarantee that the multiplication of a vector with a matrix $y = Mx$ does not lose any information is to require that the matrix must be invertible, so that the initial vector $x$ can be retrieved exactly from $y$. Of course, in our case, $M$ has dimensions $n \times m$ with $n\gg m$, and thus can not be invertible. However, we can still ensure that $M$ has maximum rank $m$. In particular, we will show that with high probability any $m\times m$ submatrix of our random matrix $M$ will be invertible. This is based on modifying a result from \cite[Theorem~8.9]{tao2006random}.

\begin{theorem}[Invertibility] \label{theorem_inverse}
Given the matrix $M\in\R^{n\times m}$, where $n>m$, whose entries are each the difference of independent and identically distributed Bernoulli random variables with parameter $p\in(0,1)$, let $\mathcal{M}\in\R^{m\times m}$ be any square submatrix of $M$. Then, it holds for any $\epsilon>0$ that
$$\mathbb{P}\left(|\det(\mathcal{M})| \geq \left(2p(1-p)\right)^{m/2} \sqrt{m!} \exp({-m^{1/2+\epsilon}})\right)=1-o(1),$$
as $m\to\infty$. In particular, a submatrix $\mathcal{M}$ is invertible with probability at least $1-o(1)$ as $m\to\infty$.
\end{theorem}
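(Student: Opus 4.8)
The plan is to reduce the statement to the classical Tao--Vu lower bound on the determinant of a square random matrix with independent, mean-zero, unit-variance entries, in exactly the rescaling spirit of the proof of Corollary~\ref{corollary_3_8}. First I would note that, since all entries of $M$ are i.i.d., any $m\times m$ submatrix $\mathcal{M}$ is itself a random $m\times m$ matrix whose entries are i.i.d.\ differences of Bernoulli$(p)$ variables, so the ambient dimension $n$ is irrelevant. Writing $\sigma^2 = 2p(1-p) = \mathrm{Var}(m_{ij})$ (Lemma~\ref{lem:bernoulli}), set $\mathcal{R} := \sigma^{-1}\mathcal{M}$; then $\mathcal{R}$ has i.i.d.\ entries that are symmetric about the origin, have unit variance, and are bounded by $\sigma^{-1}$ (hence have all moments finite and are sub-Gaussian with a constant $K=K(p)$). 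Since $|\det\mathcal{M}| = \sigma^m|\det\mathcal{R}| = (2p(1-p))^{m/2}|\det\mathcal{R}|$, the claim is precisely the Tao--Vu estimate $\mathbb{P}\big(|\det\mathcal{R}| \ge \sqrt{m!}\,\exp(-m^{1/2+\epsilon})\big) = 1-o(1)$, and the invertibility assertion follows immediately because this forces $\det\mathcal{M}\neq 0$ with probability $1-o(1)$.

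The substance (the ``modification'' of \cite[Theorem~8.9]{tao2006random} needed if the cited version is phrased only for $\pm1$ entries) is to re-run the base-times-height computation for $\mathcal{R}$. Write the rows of $\mathcal{R}$ as $X_1,\dots,X_m$, put $H_{i} := \mathrm{span}(X_1,\dots,X_{i})$ with $H_0=\{0\}$, and use the identity
$$|\det\mathcal{R}| \;=\; \prod_{i=1}^{m} \mathrm{dist}(X_i, H_{i-1}).$$
Conditioning on $X_1,\dots,X_{i-1}$, the subspace $H_{i-1}$ has dimension $i-1$ (generically) and is independent of $X_i$, so $\mathrm{dist}(X_i,H_{i-1})^2 = X_i^{\top}P_i X_i$ with $P_i$ the orthogonal projection onto $H_{i-1}^{\perp}$, and the unit-variance, mean-zero assumption gives $\mathbb{E}\big[\mathrm{dist}(X_i,H_{i-1})^2 \mid X_1,\dots,X_{i-1}\big] = \mathrm{tr}(P_i) = m-i+1$; multiplying over $i$ suggests the target $\prod_{i}(m-i+1)=m!$. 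To make this rigorous I would split the rows into a \emph{bulk} (those $i$ with $m-i+1 \ge m^{\alpha}$, for a small fixed $\alpha>0$) and a \emph{tail} (the remaining at most $m^{\alpha}$ rows). For the bulk, $X_i^{\top}P_i X_i$ concentrates around $m-i+1$ by the Hanson--Wright inequality, applied conditionally since $P_i$ is fixed given the past and $\|P_i\|_{op}=1$, $\|P_i\|_{HS}^2 = m-i+1$; choosing the precision for each row so that the failure probability is $\exp(-c_p m^{\epsilon'})$ for a small $\epsilon'$, the accumulated logarithmic error $\sum_{\text{bulk}}\log\mathrm{dist}(X_i,H_{i-1})^2 - \log\!\big(\prod_{\text{bulk}}(m-i+1)\big)$ is $o(m^{1/2+\epsilon})$ outside an event of probability $o(1)$. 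For the tail I only need that no $\mathrm{dist}(X_i,H_{i-1})$ is catastrophically small --- anything above $\exp(-m^{o(1)})$ suffices --- because the discarded ``ideal'' tail mass $\prod_{\text{tail}}(m-i+1) \le (m^{\alpha})^{m^{\alpha}} = \exp(o(m^{1/2+\epsilon}))$ is already absorbed into the slack. A small-ball (Littlewood--Offord type) estimate giving $\mathbb{P}(\mathrm{dist}(X_i,H_{i-1}) \le \exp(-m^{o(1)})) = o(1/m)$, followed by a union bound over the $m$ rows, then closes the argument.

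The main obstacle is precisely that small-ball estimate for $\mathrm{dist}(X_i,H_{i-1})$ when $H_{i-1}$ is itself random, and this is where the $\pm1$ proof genuinely needs adapting. A single unit normal vector $u\in H_{i-1}^{\perp}$ only yields $\mathrm{dist}(X_i,H_{i-1}) \ge |\langle X_i,u\rangle|$, and because $m_{ij}$ vanishes with probability $2p^2-2p+1$ (which is close to $1$ when $p$ is near $0$ or $1$), $\langle X_i,u\rangle$ fails to be anti-concentrated when $u$ is near a coordinate direction. One must therefore show that the span of $i-1$ random rows is, with probability $1-o(1/m)$, ``incompressible'' enough that its orthogonal complement contains no near-sparse direction, or else bound the negligible contribution of the exceptional configurations --- this is exactly the role of the compressible/incompressible dichotomy and the structural control of null vectors in \cite{tao2006random}. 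The only additional bookkeeping relative to the $\pm1$ case is that the relevant Littlewood--Offord and incompressibility constants now depend on $p$ through the atom probability $2p^2-2p+1 < 1$, which is harmless for any fixed $p\in(0,1)$; the Hanson--Wright step, the row splitting, and the final union bound are then routine.
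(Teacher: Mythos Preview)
Your first paragraph is exactly the paper's proof: rescale $\mathcal{R}=\sigma^{-1}\mathcal{M}$, observe that its entries are bounded with mean zero and unit variance, invoke \cite[Theorem~8.9]{tao2006random} directly, and translate back via $\det\mathcal{M}=\sigma^{m}\det\mathcal{R}$. The paper stops there, because the cited Tao--Vu theorem is already stated for bounded mean-zero unit-variance entries and needs no adaptation from the $\pm1$ case; your second and third paragraphs (the base-times-height identity, Hanson--Wright concentration for the bulk rows, and the compressible/incompressible small-ball analysis for the tail) are a reasonable outline of how that theorem is proved, but they are not required here.
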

\begin{proof}
The matrix $R=\frac{1}{\sigma}\mathcal{M}$ has entries that are bounded and have mean zero and variance one. This means it satisfies the hypotheses of \cite[Theorem~8.9]{tao2006random}, so we can conclude that 
\begin{equation}
    \mathbb{P}\left(|\det(R)| \geq \sqrt{m!} \exp({-m^{1/2+\epsilon}})\right)=1-o(1),
\end{equation}
as $m\to\infty$. Using the fact that $\sigma=\sqrt{2p(1-p)}$ and $\det(R)=\sigma^{-m}\det(\mathcal{M})$ gives the result.
\end{proof}

\Cref{theorem_inverse} says that the matrix $M\in\R^{n\times m}$ has submatrices that are likely to be invertible if the smaller dimension is sufficiently large. That is, they are likely to be invertible if the dimension of the feature vector that is the input to the transformation is sufficiently large. In practice, of course, any matrices will have finite size. In \Cref{fig:invertibility} we calculate the probability (averaged over $10^4$ independent realisations) of a given submatrix of $M$ being invertible, for two different values of $p$. For both $p=0.05$ and $p=0.1$, we see that a submatrix is invertible at least $99\%$ of the time when the dimension $m=100$ (in fact, when $p=0.1$, the $99\%$ threshold is reached when $m=48$). Since the dimension of the initial feature vector is likely to be large, any submatrices are highly likely to be invertible. In \Cref{sec:experiments}, we will generally use $p=0.05$ and $m=433$, in which case the submatrices are almost guaranteed to be invertible (the probability of being singular is negligibly small).

\begin{figure}
    \centering
    \includegraphics[width=0.6\linewidth]{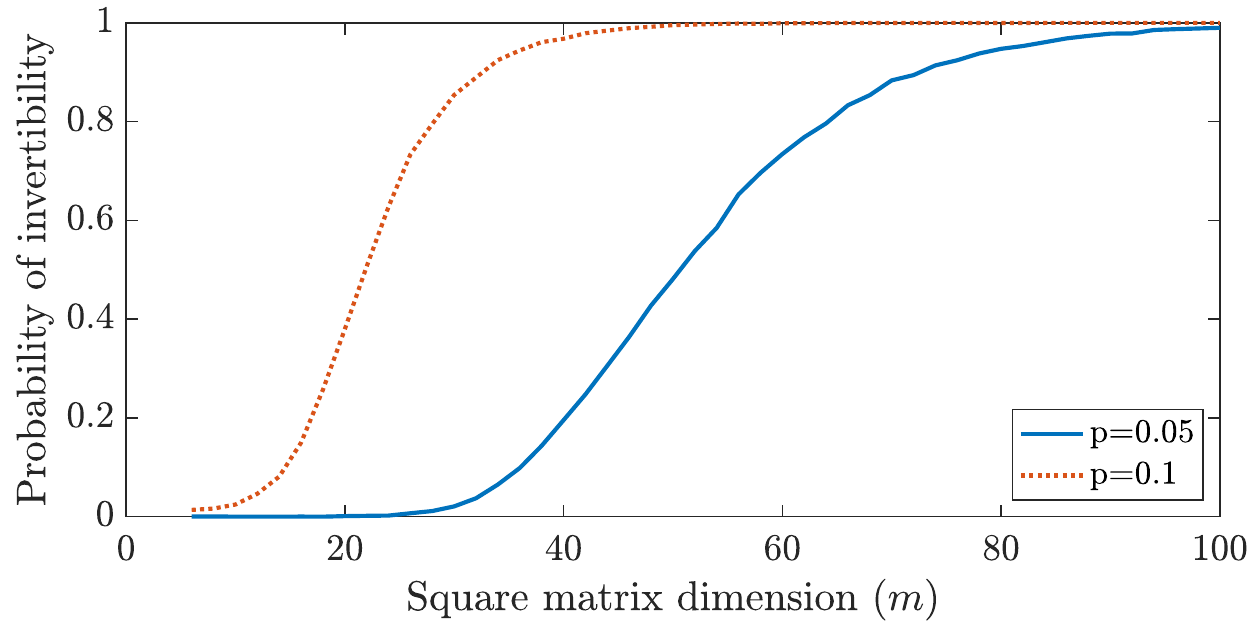}
    \caption{The probability of a submatrix of the random matrix $M$ being invertible. When the dimension of the square submatrix is arbitrarily large, the probability that it is invertible approaches one. However, for finite matrix dimensions, very high probabilities of invertibility can be obtained with relatively small matrices.}
    \label{fig:invertibility}
\end{figure}

\subsection{Properties of the cap operator}

The second part of the transformation $A$, defined in \eqref{eq:transformation}, is the application of a cap operator. This is a map $c_k:\R^n\to\R^n$ that retains the $k$ largest (in magnitude) entries of a vector and sets all the others to zero. A cap operator is a crude way to sparsify a vector, with a degree of sparsity that can be controlled by varying the parameter $k$. 

It is trivially the case that if $1\leq k\leq n$, then for any $x\in\mathbb{R}^n$ and any $p\in(0,\infty)$, it holds that
\begin{equation} \label{eq:capcts}
    \|x\|_\infty\leq\|c_k(x)\|_p\leq\|x\|_p.
\end{equation}
In fact, if $x$ is sufficiently sparse, in the sense that $\|x\|_p$ is small when $p$ is small, then $c_k(x)$ is close to $x$. Various results along these lines exist, we prove one such statement below. A different version can be found in \cite{donoho2006compressed}, for example.

\begin{theorem} \label{thm:cap}
Let $c_k : \mathbb{R}^n \longrightarrow \mathbb{R}^n$ be the cap operation that retains the $k$ largest (in magnitude) entries of a vector and sets all the others to zero. Then, for any $p \in (0,2)$, 
$$\| x - c_k(x) \|_2 \leq \|x\|_p (k+1)^{p+1}, \quad\text{for all } x\in\mathbb{R}^n.$$
\end{theorem}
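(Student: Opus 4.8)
The plan is to reduce the statement to a bound on the ``tail'' of $x$ and then combine two elementary inequalities. First I would use the fact that the cap operator and all the $p$-(quasi)norms involved are invariant under permutations of the coordinates, so without loss of generality we may relabel so that $|x_1|\geq|x_2|\geq\cdots\geq|x_n|$. With this ordering, $c_k(x)$ retains exactly the coordinates $1,\dots,k$, so $x-c_k(x)$ is supported on the index set $T:=\{k+1,\dots,n\}$. Writing $x_T$ for the restriction of $x$ to $T$, we then have the exact identity $\|x-c_k(x)\|_2^2=\|x_T\|_2^2=\sum_{j\in T}x_j^2$, and it remains to estimate $\|x_T\|_2$.

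Next I would bound the largest tail entry. Since $|x_1|,\dots,|x_{k+1}|$ are each at least $|x_{k+1}|$, we get $(k+1)\,|x_{k+1}|^p\leq\sum_{j=1}^{k+1}|x_j|^p\leq\|x\|_p^p$, hence
\[
\|x_T\|_\infty=|x_{k+1}|\leq\|x\|_p\,(k+1)^{-1/p}.
\]
The key step is then the interpolation inequality: for any $z\in\R^n$ and any $p\in(0,2]$, since $2-p\geq0$ each coordinate obeys $z_j^2=|z_j|^{2-p}|z_j|^p\leq\|z\|_\infty^{2-p}|z_j|^p$, and summing over $j$ gives $\|z\|_2^2\leq\|z\|_\infty^{\,2-p}\|z\|_p^{\,p}$. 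Applying this with $z=x_T$, and using $\|x_T\|_p\leq\|x\|_p$ together with the previous bound, yields
\[
\|x-c_k(x)\|_2^2=\|x_T\|_2^2\leq\|x_T\|_\infty^{\,2-p}\|x_T\|_p^{\,p}\leq\bigl(\|x\|_p(k+1)^{-1/p}\bigr)^{2-p}\|x\|_p^{\,p}=\|x\|_p^2\,(k+1)^{1-2/p},
\]
and taking square roots (noting $1-2/p=2(1/2-1/p)$) gives the claim.

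The argument is entirely elementary, so there is no real obstacle; the only points to watch are minor. One needs $p\leq2$ for the interpolation inequality to run, while the restriction $p<2$ only ensures the exponent $1/2-1/p$ is negative, so that the bound is genuinely useful and decreasing in $k$. For $p\leq1$ the quantity $\|\cdot\|_p$ is merely a quasi-norm, but every manipulation above is purely algebraic (sums of $p$-th powers) and remains valid. The mild design choice is to estimate the tail through its $\ell^\infty$ and $\ell^p$ ``mass'' rather than bounding $\sum_{j>k}|x_j|^2$ term by term via an integral comparison with $\sum_{j>k}j^{-2/p}$, which would work but produce a worse constant than the clean factor $(k+1)^{1/2-1/p}$.
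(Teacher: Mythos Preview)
Your argument is correct and is in fact the standard direct proof of this Stechkin-type estimate: order the coordinates, bound the largest tail entry $|x_{k+1}|$ by $\|x\|_p(k+1)^{-1/p}$, and interpolate $\|x_T\|_2^2\le\|x_T\|_\infty^{2-p}\|x_T\|_p^p$. Every step is sound, including for $p\le1$ where only sums of $p$-th powers are used.

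The paper takes a different route, proving the bound by induction on $k$: the base case $k=0$ is just $\|x\|_2\le\|x\|_p$, and the inductive step splits off the $k$-th largest coordinate via $\|x-c_k(x)\|_2\le\|x-c_{k-1}(x)\|_2+\|c_k^*(x)\|_2$. However, the final inequality in that inductive step, namely $k^{1/2-1/p}+1\le(k+1)^{1/2-1/p}$, is false for $p\in(0,2)$ (the exponent is negative, so the right-hand side is less than $1$ while the left-hand side exceeds $1$). So the paper's proof, as written, does not close; your direct approach not only differs from it but actually succeeds where it fails. The advantage of your method is that it pinpoints exactly where the gain comes from---the decay of the ordered entries---and delivers the sharp constant $(k+1)^{1/2-1/p}$ without any telescoping losses.
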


\begin{proof}
We show this by induction on $k$. First, consider the case where $k=0$. By definition of the cap operation, $c_k(x)$ is then simply the zero vector, and we have $\|x\|_2 \leq \|x\|_p$ for $0 < p < 2$ by monotonicity of the norms. 

Suppose now that the property holds for $k-1$. Let $c_k^* (x)$ be the vector obtained from $x$ by keeping its $k$'th largest entry intact and setting all others to zero. We then have
$$ \|x - c_k(x) \|_2 = \|x - c_{k-1}(x) - c_k^* (x) \|_2 \leq \|x - c_{k-1} (x) \|_2 + \|c_k^*(x)\|_2.$$
From the inductive hypothesis, we have that $\|x - c_{k-1}(x) \|_2 \leq \|x\|_p k^{p+1}$. Moreover, $\|c_k^*(x)\|_2 \leq \|x\|_2 \leq \|x\|_p$ for $p \in (0,2)$. We obtain
$$\|x - c_k(x)\|_2 \leq \|x\|_p k^{p+1} + \|x\|_p \leq \|x\|_p (k+1)^{p+1},$$
where the final inequality holds from the fact that the function $f(x)=(x+1)^{p+1}-x^{p+1}-1$ satisfies $f'(x)=(p+1)[(x+1)^p-x^p]>0$ for $x>0$ and $f(0)=0$, meaning that $f(x)>0$ for all $x>0$. The result then follows by induction.
\end{proof}

The consequence of these results is that, thanks to \eqref{eq:capcts}, the effect of the cap operation is always bounded (in the sense that $\|c_k\|_{op}<1$) and, thanks to \Cref{thm:cap}, if the initial feature vector has some sparsity, then this effect will in fact be correspondingly small (in the sense that $c_k(x)$ is close to $x$). Conversely, in the numerical experiments presented in the following section, we will show that the algorithm performs well on classification problems even when the data (and also the projected data) are not sparse. This shows that the random projections succeed in encoding the important information in a small number of the coefficients (as will be demonstrated by the fact that if the random projection step is removed, then the classification accuracy drops).

\section{Numerical experiments} \label{sec:experiments}

We would like to explore the extent to which the bioinspired transformation \eqref{eq:transformation}, which yields sparse representations of signals through the use of random projections and a cap operation, can be used in classification problems. Recall that the main inspiration for this transformation was the function of the fruit fly's olfactory system, where the corresponding system's role is to facilitate the classification of odors. As a demonstrative classification problem, we chose to attempt musical genre classification. We used the GTZAN dataset \cite{gtzan} which consists of 30-second long extracts of music from 10 different genres. We used the scattering transform as a first step to obtain a suitable feature vector. The scattering transform is a cascading sequence of alternating wavelet transforms and modulus operators that outputs coefficients that are locally invariant to translations and stable to deformations \cite{deepspectrum}. The software for the scattering transform can be found online at \cite{scatnet}.

\subsection{Methods}

First, the scattering transform is applied to the dataset. The entire dataset contains 1000 music extracts in total, 100 from each genre. The scattering transform returns a $433 \times 20000$ array of coefficients, that is organized as an array of $433 \times 20$ scattering coefficients, for each of the 1000 samples. The first dimension is a feature dimension, while the second is a time dimension. To perform classification, we took the mean of the feature arrays in the time dimension, so that we ended up with feature vectors of length $433$. This allowed us to deal with feature vectors that do not possess a time dimension, and could thus be projected without fear of mixing up time-scale information. 

To obtain the random matrix used for projection, we first sampled two random matrices whose entries were independent Bernoulli random variables with parameter $p$ and then computed the difference of those two matrices. This gives the desired random matrix whose coefficients have mean zero, as described in \Cref{sec:maths}. After random projection, we applied the cap operation to the feature vectors, retaining the $k$ entries that were largest in absolute value and setting all others to zero. Given the resulting vector, classification was performed using a support vector machine. The linear support vector machine from Matlab's Classification Learner App was used.

We performed several experiments to understand the role of the parameters of the transformation $A$. In particular, $n$ is the dimension of the space into which our random matrix projects the signal vectors. The parameter $p$ corresponds to the Bernoulli parameter we use to sample the random matrix and, finally, the parameter $k$ indicates how many coefficients we keep intact after the cap operation. 

\subsection{Results}

We initially performed classification on the feature vectors without the bioinspired transformation, as a point of reference, to see if the random projection and cap operation improved or worsened the results. The resulting accuracy was consistently around $77 \%$. This is shown by the dotted lines in Figures~\ref{accuracy_parameter_p}, \ref{accuracy_parameter_n} and \ref{accuracy_parameter_k}. 

\begin{figure}
    \centering
    \includegraphics[width = 0.7\textwidth]{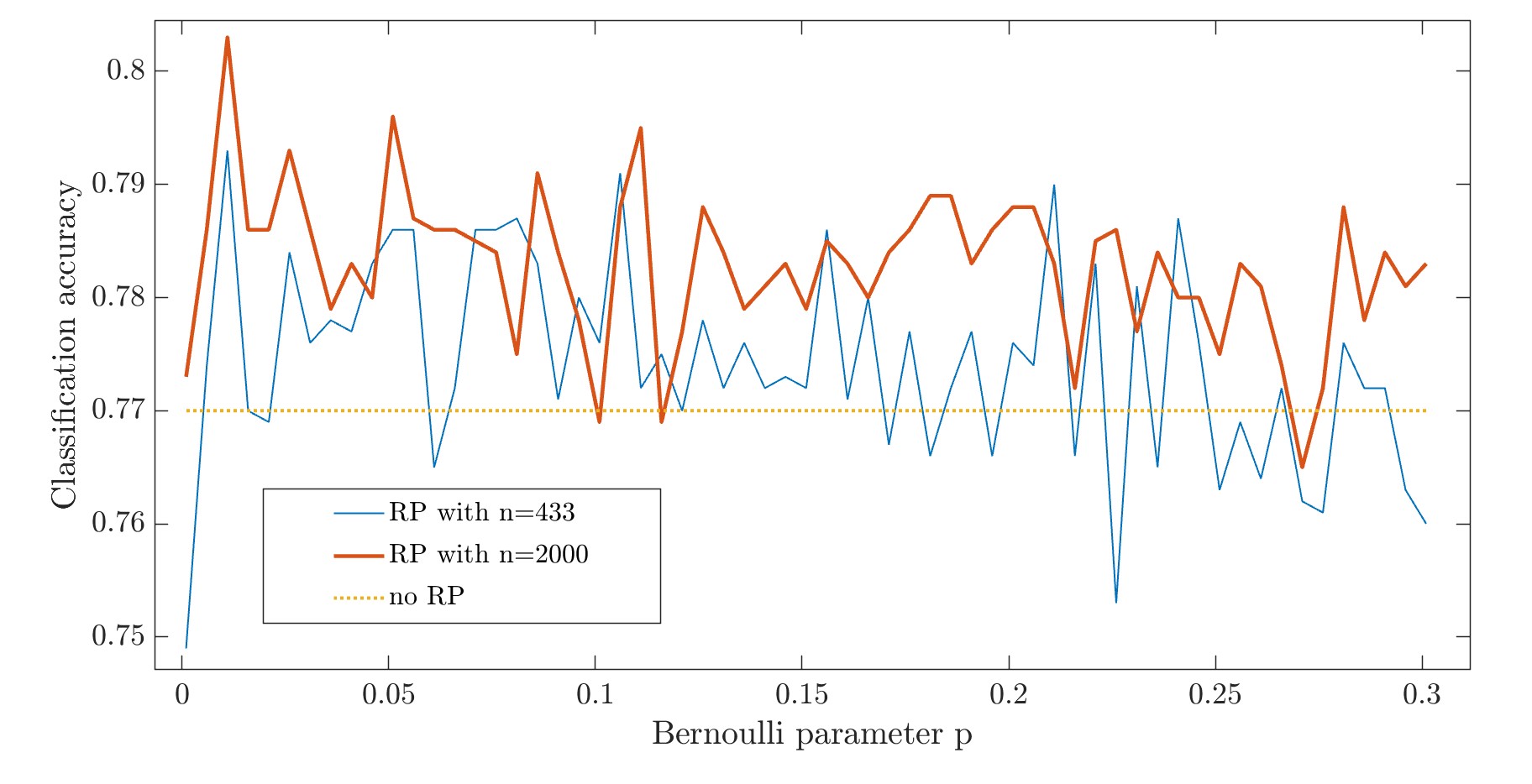}
    \caption{The classification accuracy is approximately preserved when random projections (RPs) are introduced and is not significantly affected by the Bernoulli parameter $p$. The classification accuracy is computed when using random projection of fixed dimension $n$, while varying the Bernoulli parameter and not using any cap operation.}
    \label{accuracy_parameter_p}
\end{figure}

Our next experiment was to add a random projection and understand the effect of varying the distribution parameter $p$. We performed those experiments both for both smaller ($n=433$) and larger ($n=2000$) image spaces, to see if there was a clear advantage to projecting the feature vectors into a much higher dimensional space. The results can bee seen in Figure~\ref{accuracy_parameter_p}, which suggest that there is no clear relation between the Bernoulli parameter $p$ and classification accuracy. As shown in \Cref{sec:maths}, the entries of our random matrix $M$ are zero with probability $\mathbb{P}(m_{ij}=0) = 2p^2-2p+1$. This expression is strictly decreasing for $p<0.5$, so the entries of $M$ are less likely to be zero when $p$ is larger. It is therefore in our interest to keep $p$ small in subsequent experiments, so that our random matrix is likely to be sparser, which speeds up the calculations. In the following experiments, given that the effect on classification appears to be minimal, we fix the value $p=0.05$. 

As well as suggesting that the value of $p$ does not greatly influence the classification accuracy, Figure~\ref{accuracy_parameter_p} may suggest that applying a random projection slightly improves the classification accuracy. This is particularly the case for the larger dimension $n=2000$, where the average success rate was more than a percentage point higher than for the classification without any random projection. However, this difference is not significant enough to be able to draw convincing conclusions at this point. 

\begin{figure}
    \centering
    \includegraphics[width = 0.6\textwidth]{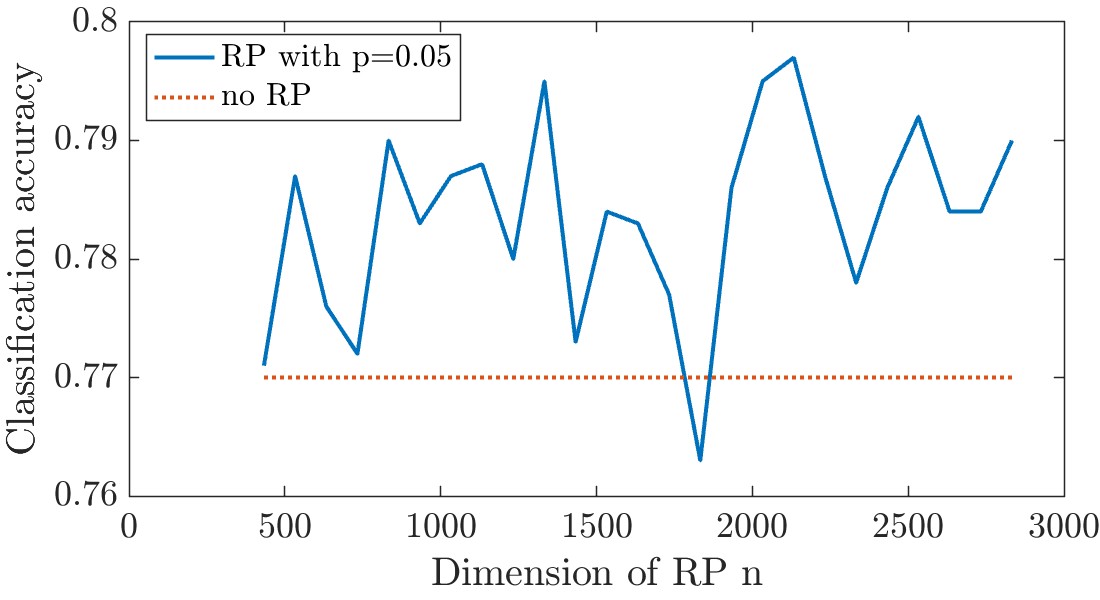}
    \caption{The classification accuracy is approximately preserved when random projections (RPs) are introduced and is not significantly affected by the dimension of the space into which we project. The classification accuracy is computed when using random projection of varying dimension $n$, fixed Bernoulli parameter $p=0.05$ and no cap operation.}
    \label{accuracy_parameter_n}
\end{figure}

To evaluate the effect of the projection dimension $n$, we fixed the Bernoulli parameter at $p=0.05$ and increased the dimension of the random matrix from $n=433$ (the dimension of the feature vectors output by the scattering transform) up to $n=2833$, adding $100$ rows every time. The results can be seen in Figure~\ref{accuracy_parameter_n}. The accuracy is relatively stable across all values of $n$, and is similar to the performance of the original feature vectors without random projection. This result is maybe not surprising, as we're not adding information with the random projection, but instead merely randomly shuffling it. It is however reassuring to observe that information isn't lost or corrupted by adding a random projection. Once again, there appears to even be a slight improvement in classification accuracy, thanks to the introduction of the random projection.

\begin{figure}
    \centering
    \includegraphics[width = 0.6\textwidth]{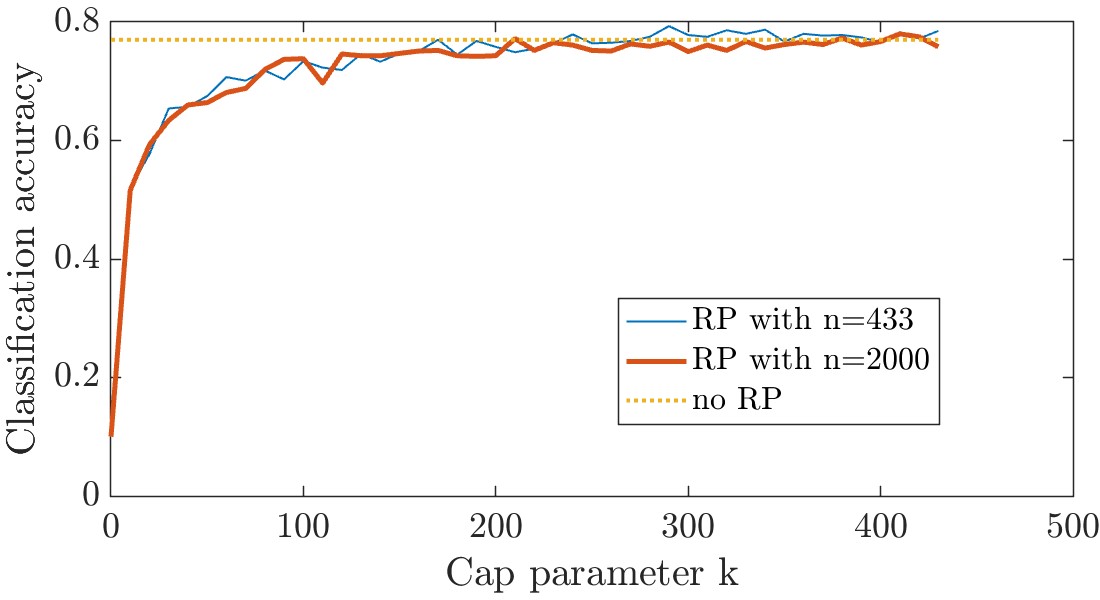}
    \caption{For sufficiently large cap parameter $k$, the introduction of a cap operation does not greatly influence the classification accuracy. This holds even when $k$ is significantly smaller than the dimension of the initial feature vector (433). Classification accuracy is computed when using random projections (RPs) of fixed dimensions $433$ and $2000$, Bernoulli parameter $p=0.05$ and a varying cap parameter $k$.}
    \label{accuracy_parameter_k}
\end{figure}

\begin{figure}
    \centering
    \includegraphics[width = 0.6\textwidth]{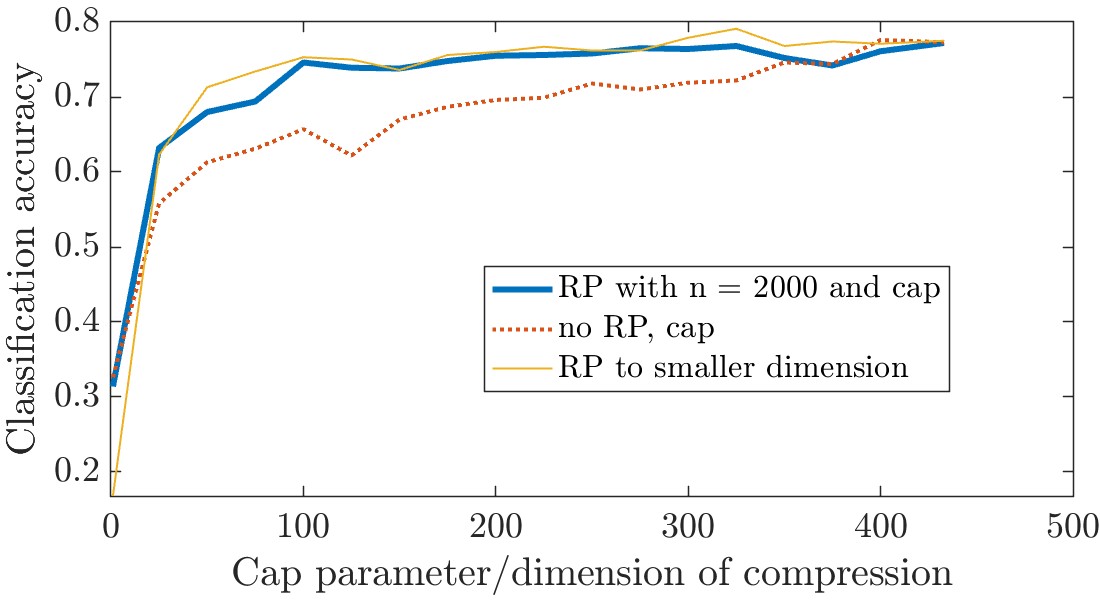}
    \caption{The application of a random projection (RP) together with a cap consistently shows higher accuracy than the use of a cap alone. On the other hand, a simple random projection to a space of smaller dimension performs similarly to the random projection and cap. All three methods show relatively good classification accuracy, as long as the cap/dimension of compression is above $100$. Both random projections in this figure were performed with a Bernoulli parameter $p = 0.05$. }
    \label{comparison_rp_id}
\end{figure}

The effect of the cap operation on the classification accuracy can also be tested. We kept the Bernoulli parameter at $p=0.05$ and added a cap operation with varying parameter $k$. We recorded classification accuracy for two different random projections; one with dimension $n=433$ and the other with $n=2000$. The results can be found in Figure~\ref{accuracy_parameter_k}. For very small values of $k$, the classification accuracy drops away quickly (down to a limiting value of $10\%$ when $k=0$, in which case the ten classes are allocated without any retained information). However, the accuracy is rather stable for larger values of $k$. In particular, $k \approx 200$ is sufficient in both cases to attain a classification accuracy above $75 \%$. This result is noteworthy as $k=200$ is less than half the size of the initial feature vectors outputted by the scattering transform. As previously with Bernoulli parameter $p$, it is in our interest to keep $k$ small, so that the final feature vector is as sparse as possible. However, it is worth noting that our experiments revealed that increasing the cap parameter $k$ did not dramatically decrease the training time of the support vector machine: with the Bernoulli parameter set at $p=0.05$ and no cap operation the training time was $24.0$ seconds, while setting a cap of $k=200$ led to a training time of $23.3$ seconds. This is because the linear support vector machine is not set up to be able to take advantage of the sparsity.

Taken together, these experiments show that projecting randomly into a space of seemingly arbitrary dimension with a matrix with small Bernoulli parameter before capping to leave just a couple of hundred entries gives an algorithm with good classification accuracy. In particular, we take projection into a space with dimension $n=2000$, with Bernoulli parameter $p=0.05$ and cap parameter $k=200$ as our gold standard. Figure~\ref{comparison_rp_id} shows the comparison of this transformation with the application of a cap alone. The full transformation (with random projection followed by the cap) consistently performs better than the cap alone, showing that the application of a random projection is important for retaining information when the vectors are truncated using the cap operator. We also compared the performance of our transformation $A$ with a simple random compression. This random compression was performed by multiplying by the same random matrix used in our bioinspired transformation, with a Bernoulli parameter $p = 0.05$, but with the dimension decreased. The image dimension of the random compression is shown on the horizontal axis of Figure~\ref{comparison_rp_id}, so that it can be compared to the cap parameter in the two other algorithms. All three sets of results show that the accuracy does not drop significantly as long as the number of coefficients retained is above $100$, thus suggesting that we can easily decrease the dimension of the feature vectors (which initially have $433$ entries), thus making them sparser, without impeding our ability to classify them. 

\begin{figure}
    \centering
    \includegraphics[width = 0.6\textwidth]{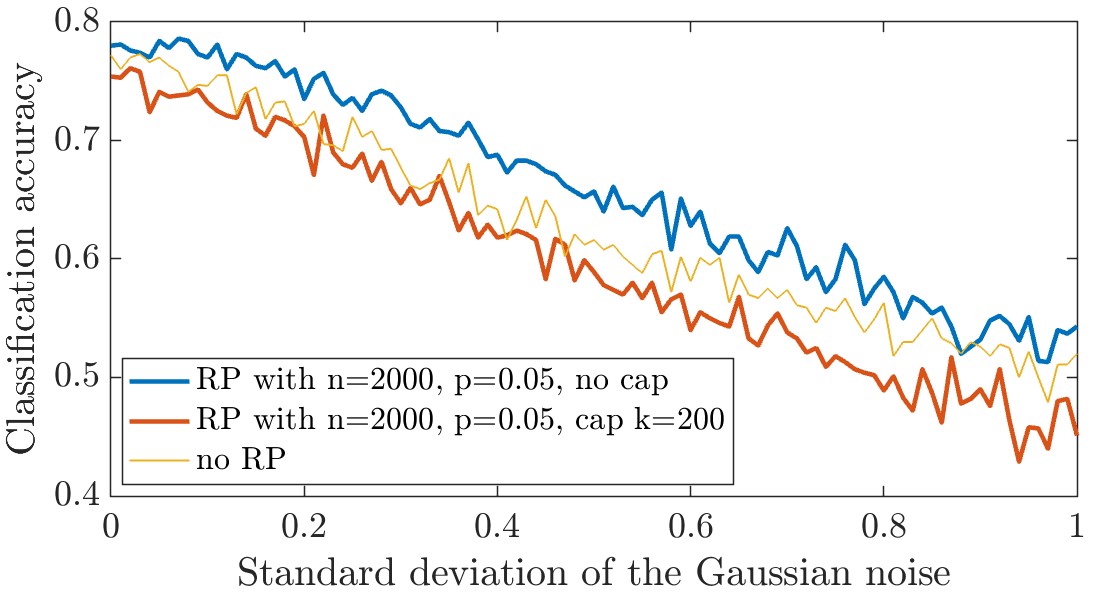}
    \caption{The classification accuracy shows improved robustness to noise when random projections (RPs) are introduced. This effect is, however, lost when a cap operator is added. Classification accuracy is computed when adding Gaussian noise of mean zero and varying standard deviation to the initial feature vectors.}
    \label{accuracy_noise}
\end{figure}

Our final experiment focused on determining whether the addition of the bioinspired transformation $A$ could improve the robustness of the classification. We added random Gaussian noise to the feature vectors outputted by the scattering transform and compared three cases: classifying the noisy vectors directly without any transformation, classifying the noisy vectors that have been randomly projected to a space of dimension $n = 2000$ and, finally, classifying the noisy vectors after applying both the random projection and a cap operation with a parameter $k = 200$. The results for these three cases are reported in Figure~\ref{accuracy_noise}. In each case the noise was generated independently with zero mean and increasing standard deviation. It is clear from Figure~\ref{accuracy_noise} that adding the random projection increases the robustness of the classification. Classification with the projected vectors consistently performs a few percent better than without any random projection. Conversely, the addition of a cap operation seems to yield slightly worse results, suggesting that there is a trade off between robustness and sparsity.

\section{Conclusions}

Our results demonstrate yet another successful application of biomimicry, this time to a classification problem. We designed a two-step signal transform that used random projections and a cap operation, inspired by the function of the fruit fly's olfactory system. This transformation, when paired with a support vector machine, gives you the ability to sparsify the data while preserving the classification accuracy. Our experiments showed that it also leads to robustness benefits, giving improved classification accuracy when random errors are added to the data. Perhaps most importantly, the signal transform is very simple and requires very little computational power to execute, giving a distinct advantage over more intricate approaches.

Sparsity and robustness are important properties for biological sensing systems. Animals have limited neural bandwidth so need to be able to encode information as efficiently as possible. Sparsifying data is one way of achieving this. Other strategies include the using compressive nonlinearities to rescale data \cite{hudspeth2010critique} and using known statistical properties of target data sets to obtain low-dimensional representations \cite{ammari2020biomimetic, attias1997coding, gervain2019efficient}. Robustness is similarly important for an animal's ability to understand its noisy environment and there are many examples of biological systems demonstrating remarkable robustness properties, see \emph{e.g.} \cite{davies2021robustness, gervain2019efficient, mesgarani2014mechanisms}. These outstanding properties should motivate further studies of biological and biologically inspired systems, with the aim of understanding the fundamental mechanisms, so that they can be implemented into novel solutions to challenging problems.

\section*{Acknowledgements}

The work of BD was partly supported by the EC-H2020 FETOpen project BOHEME under grant agreement No. 863179.

\section*{Data availability}

The code used for the numerical experiments in this work is available at \url{https://doi.org/10.5281/zenodo.6642660}.

\bibliographystyle{abbrv}
\bibliography{refs}

\begin{thebibliography}{10}

\bibitem{databasefriendly}
D.~Achlioptas.
\newblock Database-friendly random projections: Johnson-{Lindenstrauss} with
  binary coins.
\newblock {\em J. Comput. Syst. Sci.}, 66(4):671--687, 2003.

\bibitem{ammari2020biomimetic}
H.~Ammari and B.~Davies.
\newblock Asymptotic links between signal processing, acoustic metamaterials
  and biology.
\newblock {\em arXiv preprint arXiv:2005.12794}, 2020.

\bibitem{ammari2020mimicking}
H.~Ammari and B.~Davies.
\newblock Mimicking the active cochlea with a fluid-coupled array of
  subwavelength {Hopf} resonators.
\newblock {\em Proc. R. Soc. A}, 476(2234):20190870, 2020.

\bibitem{deepspectrum}
J.~And{\'e}n and S.~Mallat.
\newblock Deep scattering spectrum.
\newblock {\em IEEE T. Signal Proces.}, 62(16):4114--4128, 2014.

\bibitem{scatnet}
J.~Andén, L.~Sifre, S.~Mallat, M.~Kapoko, V.~Lostanlen, and E.~Oyallon.
\newblock {ScatNet} v0.2.
\newblock \url{https://www.di.ens.fr/data/software/scatnet/download/}, 2013.
\newblock Accessed on 2021-10-21.

\bibitem{visualCategorization}
R.~Arriaga, D.~Rutter, M.~Cakmak, and S.~Vempala.
\newblock Visual categorization with random projection.
\newblock {\em Neural Comput.}, 27(10):1--16, 2015.

\bibitem{robustlearning}
R.~I. Arriaga and S.~Vempala.
\newblock An algorithmic theory of learning: Robust concepts and random
  projection.
\newblock {\em Mach. Learn.}, 63:161--182, 2006.

\bibitem{concentrationInvariant}
K.~Asahina, M.~Louis, S.~Piccinotti, and L.~B. Vosshall.
\newblock A circuit supporting concentration-invariant odor perception in
  drosophila.
\newblock {\em J. Biol.}, 8(9), 2009.

\bibitem{attias1997coding}
H.~Attias and C.~Schreiner.
\newblock Coding of naturalistic stimuli by auditory midbrain neurons.
\newblock {\em Advances in neural information processing systems}, 10, 1997.

\bibitem{baldassari2021multi}
L.~Baldassari and A.~Scapin.
\newblock Multi-scale classification for electrosensing.
\newblock {\em SIAM J. Imaging Sci.}, 14(1):26--57, 2021.

\bibitem{benyus1997biomimicry}
J.~M. Benyus.
\newblock {\em Biomimicry: Innovation Inspired by Nature}.
\newblock Morrow New York, 1997.

\bibitem{bruna2013invariant}
J.~Bruna and S.~Mallat.
\newblock Invariant scattering convolution networks.
\newblock {\em IEEE T. Pattern Anal.}, 35(8):1872--1886, 2013.

\bibitem{campbell}
R.~A. Campbell, H.~Qin, K.~Honegger, and W.~Li.
\newblock Imaging a population code for odor identity in the drosophila
  mushroom body.
\newblock {\em J. Neurosci.}, 33(25):10568--10581, 2013.

\bibitem{neuralAlgorithm}
S.~Dasgupta, C.~F. Stevens, and S.~Navlakha.
\newblock A neural algorithm for a fundamental computing problem.
\newblock {\em Science}, 358(6364):793--796, 2017.

\bibitem{davies2021robustness}
B.~Davies and L.~Herren.
\newblock Robustness of subwavelength devices: a case study of cochlea-inspired
  rainbow sensors.
\newblock {\em Proc. R. Soc. A}, 478(2262):20210765, 2022.

\bibitem{donoho2006compressed}
D.~L. Donoho.
\newblock Compressed sensing.
\newblock {\em IEEE T. Inform. Theory}, 52(4):1289--1306, 2006.

\bibitem{fine2002history}
E.~J. Fine, C.~C. Ionita, and L.~Lohr.
\newblock The history of the development of the cerebellar examination.
\newblock {\em Semin. Neurol.}, 22(04):375--384, 2002.

\bibitem{gervain2019efficient}
J.~Gervain and M.~N. Geffen.
\newblock Efficient neural coding in auditory and speech perception.
\newblock {\em Trends Neurosci.}, 42(1):56--65, 2019.

\bibitem{faceRecognition}
N.~Goel, G.~Bebis, and A.~Nefian.
\newblock Face recognition experiments with random projection.
\newblock {\em Proc. SPIE}, 5779, Biometric Technology for Human Identification
  II, 2005.

\bibitem{gottwald2019bio}
M.~Gottwald, H.~Herzog, and G.~von~der Emde.
\newblock A bio-inspired electric camera for short-range object inspection in
  murky waters.
\newblock {\em Bioinspir. Biomim.}, 14(3):035002, 2019.

\bibitem{hodges2011underwater}
R.~P. Hodges.
\newblock {\em Underwater Acoustics: Analysis, Design and Performance of
  Sonar}.
\newblock John Wiley \& Sons, 2011.

\bibitem{hudspeth2010critique}
A.~J. Hudspeth, F.~J{\"u}licher, and P.~Martin.
\newblock A critique of the critical cochlea: Hopf—a bifurcation—is better
  than none.
\newblock {\em J. Neurophysiol.}, 104(3):1219--1229, 2010.

\bibitem{lecun2015deep}
Y.~LeCun, Y.~Bengio, and G.~Hinton.
\newblock Deep learning.
\newblock {\em Nature}, 521(7553):436--444, 2015.

\bibitem{mallat2012group}
S.~Mallat.
\newblock Group invariant scattering.
\newblock {\em Commun. Pure Appl. Math.}, 65(10):1331--1398, 2012.

\bibitem{olfactoryinfo}
N.~Y. Masse, G.~C. Turner, and G.~S. Jefferis.
\newblock Olfactory information processing in drosophila.
\newblock {\em Curr. Biol.}, 19(16):R700--R713, 2009.

\bibitem{mesgarani2014mechanisms}
N.~Mesgarani, S.~V. David, J.~B. Fritz, and S.~A. Shamma.
\newblock Mechanisms of noise robust representation of speech in primary
  auditory cortex.
\newblock {\em P. Natl. Acad. Sci. USA}, 111(18):6792--6797, 2014.

\bibitem{nielsen2015neural}
M.~A. Nielsen.
\newblock {\em Neural Networks and Deep Learning}, volume~25.
\newblock Determination Press San Francisco, CA, USA, 2015.

\bibitem{assemblies}
C.~H. Papadimitriou and S.~S. Vempala.
\newblock {Random Projection in the Brain and Computation with Assemblies of
  Neurons}.
\newblock In A.~Blum, editor, {\em 10th Innovations in Theoretical Computer
  Science Conference (ITCS 2019)}, volume 124 of {\em Leibniz International
  Proceedings in Informatics (LIPIcs)}, pages 57:1--57:19, Dagstuhl, Germany,
  2018. Schloss Dagstuhl--Leibniz-Zentrum fuer Informatik.

\bibitem{perisse2013shocking}
E.~Perisse, C.~Burke, W.~Huetteroth, and S.~Waddell.
\newblock Shocking revelations and saccharin sweetness in the study of
  {Drosophila} olfactory memory.
\newblock {\em Curr. Biol.}, 23(17):R752--R763, 2013.

\bibitem{RandomKernelMachines}
A.~Rahimi and B.~Recht.
\newblock Random features for large-scale kernel machines.
\newblock In J.~Platt, D.~Koller, Y.~Singer, and S.~Roweis, editors, {\em
  Advances in Neural Information Processing Systems}, volume~20. Curran
  Associates, Inc., 2007.

\bibitem{KitchenSinks}
A.~Rahimi and B.~Recht.
\newblock Weighted sums of random kitchen sinks: Replacing minimization with
  randomization in learning.
\newblock In D.~Koller, D.~Schuurmans, Y.~Bengio, and L.~Bottou, editors, {\em
  Advances in Neural Information Processing Systems}, volume~21. Curran
  Associates, Inc., 2008.

\bibitem{rupin2019mimicking}
M.~Rupin, G.~Lerosey, J.~de~Rosny, and F.~Lemoult.
\newblock Mimicking the cochlea with an active acoustic metamaterial.
\newblock {\em New J. Phys.}, 21(9):093012, 2019.

\bibitem{stevens}
C.~F. Stevens.
\newblock What the fly's nose tells the fly's brain.
\newblock {\em P. Natl. Acad. Sci. USA}, 112(30):9460--9465, 2015.

\bibitem{stevensStatistical}
C.~F. Stevens.
\newblock A statistical property of fly odor responses is conserved across
  odors.
\newblock {\em P. Natl. Acad. Sci. USA}, 113(24):6737--6742, 2016.

\bibitem{taobook}
T.~Tao.
\newblock {\em Topics in Random Matrix Theory}, volume 132 of {\em Graduate
  Studies in Mathematics}.
\newblock American Mathematical Society, 2012.

\bibitem{tao2006random}
T.~Tao and V.~Vu.
\newblock On random $\pm$1 matrices: singularity and determinant.
\newblock {\em Random Struct. Algor.}, 28(1):1--23, 2006.

\bibitem{biomimicry}
{The Biomimicry Institute}.
\newblock Biomimicry {Examples}.
\newblock \url{https://biomimicry.org/examples/}.
\newblock Accessed: 2012-05-26.

\bibitem{gtzan}
G.~Tzanetakis and P.~Cook.
\newblock Musical genre classification of audio signals.
\newblock {\em IEEE T. Speech Audi P.}, 10(5):293--302, 2002.

\end{thebibliography}
\end{document}